\documentclass[]{bytedance_seed}

\usepackage[toc,page,header]{appendix}

\usepackage{minitoc}
\usepackage{mathtools} 

\usepackage{hyperref}       
\usepackage{url} 
\usepackage[utf8]{inputenc} 
\usepackage[T1]{fontenc}    
\usepackage{booktabs}       
\usepackage{amsfonts}       
\usepackage{nicefrac}       
\usepackage{microtype}      
\usepackage{xcolor}         
\usepackage{graphicx}
\usepackage{subcaption}
\usepackage{bbm}
\usepackage{enumitem}
\usepackage{algorithm}
\usepackage{algpseudocode}
\usepackage{wrapfig}
\usepackage{tabularx}
\usepackage{multirow}
\usepackage{xcolor}
\usepackage{colortbl}
\usepackage{amsthm}

\newcommand{\KL}{D_{\mathrm{KL}}}

\newtheorem{proposition}{Proposition}

\title{ FLAC: Maximum Entropy RL via Kinetic Energy Regularized Bridge Matching  }

\author[1,2,3*]{Lei Lv}
\author[2]{Yunfei Li}
\author[3]{Yu Luo}
\author[3\dagger]{Fuchun Sun}
\author[2\dagger]{Xiao Ma}

\affiliation[1]{Shanghai Research Institute for Intelligent Autonomous Systems}
\affiliation[2]{ByteDance Seed}
\affiliation[3]{Tsinghua University}

\contribution[*]{The work was accomplished during the author’s internship at ByteDance Seed}
\contribution[\dagger]{Corresponding authors}

\abstract{
Iterative generative policies, such as diffusion models and flow matching, offer superior expressivity for continuous control but complicate Maximum Entropy Reinforcement Learning because their action log-densities are not directly accessible. To address this, we propose \textbf{Field Least-Energy Actor-Critic (FLAC)}, a likelihood-free framework that regulates policy stochasticity by penalizing the kinetic energy of the velocity field.
Our key insight is to formulate policy optimization as a Generalized Schr\"odinger Bridge (GSB) problem relative to a high-entropy reference process (e.g., uniform). Under this view, the maximum-entropy principle emerges naturally as staying close to a high-entropy reference while optimizing return, without requiring explicit action densities. In this framework, kinetic energy serves as a physically grounded proxy for divergence from the reference: minimizing path-space energy bounds the deviation of the induced terminal action distribution. Building on this view, we derive an energy-regularized policy iteration scheme and a practical off-policy algorithm that automatically tunes the kinetic energy via a Lagrangian dual mechanism. Empirically, FLAC achieves superior or comparable performance on high-dimensional benchmarks relative to strong baselines, while avoiding explicit density estimation.
}

\date{\today}
\checkdata[Project Page]{\url{https://pinkmoon-io.github.io/flac.github.io/}}
\correspondence{ Fuchun Sun at \email{fcsun@tsinghua.edu.cn}, Xiao Ma at \email{xiao.ma@bytedance.com}}

\begin{document}
\maketitle


\section{Introduction}

Iterative generative policies, including flow matching and diffusion models~\cite{dhariwal2021diffusion,lipman2022flow,ho2020denoising}, have recently emerged as a powerful paradigm in reinforcement learning~\cite{wang2022diffusion,park2025flow}. Unlike conventional Gaussian actors~\cite{haarnoja2018soft} that output actions directly, these implicit policies define the policy through a sequential generation procedure that transport a simple base noise distribution to complex, state-conditioned action distributions. This expressiveness allows for modeling rich, multi-modal behaviors~\cite{chi2023diffusion}, enabling these policies to achieve superior performance in high-dimensional control tasks and data-driven settings where simple unimodal distributions fall short.

However, coupling these iterative generative policies with Maximum-Entropy RL~\cite{ziebart2010modeling,haarnoja2018soft} is nontrivial. In RL, a Maximum-Entropy objective is often essential for preventing premature collapse and for sustaining exploration by explicitly encouraging stochasticity. Yet Maximum-Entropy methods typically rely on the policy log-density $\log \pi(a\mid s)$ to quantify and regulate this stochasticity. For iterative generators, $\log \pi(a\mid s)$ is not directly accessible and is often difficult to compute since the action distribution is only defined implicitly through a multi-step generation procedure. Consequently, existing approaches resort to additional estimation machinery~\cite{celik2025dime}, such as training auxiliary networks~\cite{zhang2025sac} or regularizing tractable distributional proxies~\cite{wang2024diffusion}. While effective in some cases, these strategies introduce extra complexity and computation, and often lead to suboptimal exploration.

To address this, we propose a fundamental shift in perspective: instead of explicitly estimating and tuning terminal entropies, we cast entropy-regularized \emph{policy optimization} as a Generalized Schr\"odinger Bridge (GSB) problem~\cite{liu2023generalized}. The Schr\"odinger Bridge Problem (SBP)~\cite{pavon2021data,shi2023diffusion,chetrite2021schrodinger} studies entropy-regularized transport by finding a trajectory distribution that stays close to a reference stochastic process while inducing desired terminal behavior. In this framework, the Maximum Entropy principle is no longer an external heuristic; rather, it follows from a structured trade-off between terminal utility and closeness to a high-entropy reference on path space. In particular, our derivation characterizes the induced terminal action distribution as a reweighting of the reference terminal marginal; when this reference marginal is set to be approximately uniform over the bounded action domain, the characterization aligns with the standard maximum-entropy principle.
Crucially, we theoretically show that controlling deviation from the reference on the path space also controls the induced terminal action distribution. Moreover, for velocity-field-driven iterative generators, we show that this path-space deviation can be controlled via the kinetic energy of the flow~\cite{liu2023generalized} (i.e., the expected path integral of the squared velocity/drift magnitude along the generation trajectory), which directly motivates a least-kinetic regularizer.

Motivated by this perspective, we propose \textbf{Field Least-Energy Actor-Critic (FLAC)}, a novel framework that instantiates this least-kinetic GSB regularization in RL.
The actor is optimized to maximize Q-values while simultaneously minimizing this kinetic energy, effectively balancing reward maximization with the preservation of generation stochasticity. Furthermore, we introduce an automatic tuning mechanism for the energy penalty, ensuring the policy adapts its exploration level dynamically during training.

We evaluate FLAC on a suite of challenging continuous control benchmarks, including DMControl~\cite{tassa2018deepmind} and HumanoidBench~\cite{sferrazza2024humanoidbench}. Our results demonstrate that FLAC achieves competitive or superior performance compared to state-of-the-art baselines.
\section{Related Work}
\label{sec:related_work}

\textbf{Iterative Generative Policies.}
In offline RL and imitation learning, diffusion/flow policies serve as flexible behavior models or policy classes trained from fixed datasets, where mode coverage are central~\citep{levine2020offline, wang2022diffusion, yang2023policy, chi2023diffusion,park2025flow}.
Recent work studies value-/energy-guided training and sampling, where Q-values or learned energies bias generators toward high-return actions while maintaining data support~\citep{ding2024diffusion, psenka2023learning, lu2023contrastive, jain2024sampling}.
For online RL, iterative policies have begun to be combined with actor-critic updates and efficiency-oriented designs~\citep{wang2024diffusion, celik2025dime, lv2025flow, zhang2025sac}.
Beyond RL benchmarks, diffusion/flow policies are also used in robotics and visuomotor control as general action-generation modules, underscoring their practical scalability when coupled with strong representation learning~\citep{chi2023diffusion}.

\textbf{Entropy Regulators for Generative Policies.} Maximum-entropy RL encourages exploration via entropy or KL regularization~\citep{haarnoja2018soft, ziebart2010modeling}.
However, for policies defined implicitly by iterative samplers (diffusion/flow), the induced action density may be unavailable, making density-based regularization expensive or fragile in online RL with limited solver budgets.
Likelihood evaluation can be tied to change-of-variables along ODE dynamics~\citep{chen2018neural} or path marginalization in SDEs~\citep{song2020score}, both of which are nontrivial in practice.
Recent methods integrate iterative generative policies with off-policy actor--critic learning by introducing practical entropy/exploration regulators tailored to diffusion/flow samplers. 
DIME~\citep{celik2025dime} optimizes a complex variational surrogate objective of entropy to control stochasticity. 
Wang et al.~\citep{wang2024diffusion} approximate the policy entropy with a multivariate Gaussian and use it to calibrate exploration noise. 
Zhang et al.~\citep{zhang2025sac} train an additional noise-estimation network to enable entropy-style regularization for flow policies.

\textbf{Schr$\ddot{\text{o}}$dinger Bridges: Path-Space KL, Optimal Transport, and GSB.} Schr\"odinger bridges provide a variational formulation for the most likely stochastic evolution between distributions relative to a reference diffusion, linking entropy regularization, stochastic control, and optimal transport~\citep{leonard2013survey, leonard2012schrodinger, villani2008optimal}.
Deterministic limits recover Benamou--Brenier kinetic-energy optimal transport~\citep{mikami2004monge, benamou2000computational}, which also motivates transport-learning methods~\citep{lipman2022flow, liu2022flow}.
On the stochastic side, learning-based SB solvers and diffusion-SB connections have been developed for fitting stochastic transports~\citep{pavon2021data, vargas2021solving, shi2023diffusion}.
The generalized Schr\"odinger bridge further relaxes hard terminal constraints into soft terminal potentials, yielding one-ended objectives aligned with decision-making settings where targets are specified by utilities or rewards~\citep{liu2023generalized}.

\section{Preliminaries}
\label{sec:preliminary}

\subsection{Entropy-Regularized RL}
\label{sec:prelim_rl}

We consider a Markov Decision Process (MDP)~\cite{bellman1957markovian} defined by the tuple $\mathcal{M} = (\mathcal{S}, \mathcal{A}, p, r, \gamma)$, with continuous state space $\mathcal{S} \in \mathbb{R}^{d_s}$ and action space $\mathcal{A} \in \mathbb{R}^{d_a}$. The transition dynamics are $p(s' \mid s, a)$, the reward function is $r(s, a)$, and $\gamma \in [0, 1)$ is the discount factor. The goal is to learn a policy $\pi(a \mid s)$ that maximizes the expected return~\cite{sutton1998reinforcement}.

In continuous control, to prevent premature convergence and encourage exploration, the objective is often augmented with an entropy term (Maximum Entropy RL):
\begin{align}
J_{\text{MaxEnt}}(\pi) = \mathbb{E}_{\pi}\!\! \left[ \sum_{t=0}^\infty \!\!\gamma^t \!\left( r(s_t, a_t) \!+\! \alpha \mathcal{H}(\pi(\cdot \mid s_t)) \right) \right],
\label{eq:rl_objective}
\end{align}
where $\mathcal{H}(\pi) = -\mathbb{E}_{a \sim \pi}[\log \pi(a \mid s)]$, and maximizing $\mathcal{H}(\pi)$
is equivalent to minimizing $\KL(\pi(\cdot\mid s)\,\|\,\mathrm{Unif}(\mathcal{A}))$. Notably, MaxEnt RL yields a Boltzmann optimal policy of the form $\pi^*(a\mid s)\propto \exp(Q(s,a)/\alpha)$, mirroring the exponential-tilting closed-form structure that will reappear in our GSB formulation.

\subsection{Iterative Generative Policies}
\label{sec:prelim_iterative_policy}

Unlike explicit policies (e.g., Gaussians) that directly output action samples, iterative generative policies define the distribution $\pi(a \mid s)$ implicitly through a transport process. Let $\tau \in [0, 1]$ denote the continuous generation time. The action generation is modeled as the solution to a state-conditioned Stochastic Differential Equation (SDE)~\cite{song2020score,liu2025flow}:
\begin{align}
dX_\tau = u(s, \tau, X_\tau) d\tau + \sigma dW_\tau, \quad X_0 \sim \mu_0,
\label{eq:sde_dynamics}
\end{align}
where $X_\tau \in \mathbb{R}^{d_a}$ is the latent state, $X_0$ is sampled from a simple prior $\mu_0$ (typically $\mathcal{N}(0, I)$ or uniform distribution), and $a \coloneqq X_1$ is the realized action. The drift term $u_\theta: \mathcal{S} \times [0,1] \times \mathbb{R}^{d_a} \to \mathbb{R}^{d_a}$ is a learnable vector field (velocity field), and $W_\tau$ is a standard Wiener process. 

A key property of Eq.~\eqref{eq:sde_dynamics} is that the marginal density of the terminal state, $\pi(X_1 \mid s)$ is not directly accessible. Evaluating $\log \pi(a \mid s)$ requires solving the instantaneous change of variables formula or marginalizing over all possible paths, which is computationally expensive and numerically unstable during online training. This necessitates a likelihood-free approach to stochasticity regulation.

\subsection{The Schr$\ddot{\text{o}}$dinger Bridge Problem}
\label{sec:prelim_sb}

The Schr\"odinger Bridge (SB) problem~\cite{chetrite2021schrodinger} addresses the question of finding the most likely stochastic evolution between two probability distributions given a reference process.
Formally, let $\Omega = C([0, 1], \mathbb{R}^d)$ be the path space, and let $X_\tau: \Omega \to \mathbb{R}^d$ be the canonical coordinate process defined by $X_\tau(\omega) = \omega(\tau)$, where $\omega \in \Omega$. We denote the marginal distribution at time $\tau$ as
\[
\mathbb{P}_\tau \coloneqq (X_\tau)_{\#} \mathbb{P}.
\]

Given a reference $\mathbb{P}^{\mathrm{ref}}$ (typically the uncontrolled Brownian motion)~\cite{leonard2013survey} and two marginals $\mu_0, \mu_1$, the SB problem seeks a measure $\mathbb{P}^*$ that minimizes a divergence metric $\mathcal{D}$ with respect to the reference, subject to matching the marginals:
\begin{align}
    \min_{\mathbb{P}} \ \mathcal{D}(\mathbb{P}\Vert \mathbb{P}^{\mathrm{ref}}) \quad \text{s.t.} \quad \mathbb{P}_0 = \mu_0, \ \mathbb{P}_1 = \mu_1.
\end{align}

Specifically, for SDEs, $\mathcal{D}$ is the KL divergence; for ODEs, it connects to the Wasserstein-2 distance~\cite{tamogashev2025data}. This formulation is often referred to as a ``Data-to-Data'' bridge, commonly used in generative modeling to connect noise and data. Recent works~~\citep{liu2023generalized} have extended this to the Generalized Schr\"odinger Bridge (GSB), where the hard terminal constraint $\mathbb{P}_1 = \mu_1$ is relaxed into a soft potential or functional constraint. This generalization is crucial for our formulation in Section~\ref{sec:gsb_formulation}, where the target is defined by rewards rather than samples.

\subsection{Kinetic Energy and Path Constraint}
\label{sec:prelim_path_energy}

To regulate the policy without access to terminal log-densities, we lift the perspective from the \emph{action space} to the \emph{path space}. We define the Kinetic Energy of the generation process as the expected physical work done by the drift field:
\begin{align}
\mathcal{E}(s) \coloneqq \mathbb{E} \left[ \int_0^1 \frac{1}{2} \|u_\theta(s, \tau, X_\tau)\|^2 d\tau \right].
\end{align}
This quantity serves as a unified proxy for the divergence from the reference measure $\mathbb{P}^{\mathrm{ref}}$ (the base noise process) across both stochastic and deterministic regimes.

\paragraph{Stochastic Regime ($\sigma > 0$).}
The path divergence is proportional to the energy~\cite{tzen2019theoretical}. As derived in Appendix~\ref{app:proof_girsanov_rep}:
\begin{align}
\KL(\mathbb{P}^{\theta} \Vert \mathbb{P}^{\mathrm{ref}}) = \frac{1}{\sigma^2} \mathcal{E}(s).
\end{align}
Here, $\mathbb{P}^{\theta}$ and $\mathbb{P}^{\mathrm{ref}}$ denote the policy and reference path measures (both initialized with $X_0\sim\mu_0$), and their terminal marginals at $\tau=1$ are $\pi_\theta(\cdot\mid s)$ and $\mu^{\mathrm{ref}}_1$. Crucially, we establish that the divergence between path measures strictly upper-bounds the divergence between $\pi(\cdot|s)$ and the reference terminal marginal $\mu^{\mathrm{ref}}_1$:
\begin{align}
\KL(\pi_\theta \Vert \mu^{\mathrm{ref}}_1) \le \KL(\mathbb{P}^{\theta} \Vert \mathbb{P}^{\mathrm{ref}}) = \frac{1}{\sigma^2} \mathcal{E}(s).
\end{align}
We provide the proof of this inequality in Appendix~\ref{app:proof_dpi_terminal}. This theoretical result is fundamental to our framework, as it guarantees that minimizing the kinetic energy is a sufficient condition to enforce the constraint on the terminal action distribution .

\paragraph{Deterministic Regime ($\sigma \to 0$).}
In the ODE case, the kinetic energy relates to the Optimal Transport cost~\cite{mikami2004monge,benamou2000computational}. As detailed in Appendix~\ref{app:proof_benamou_brenier}:
\begin{align}
\mathcal{W}_2^2(\mu_0, \pi_\theta) \le 2\mathcal{E}(s).
\end{align}
In the deterministic (ODE) case, the reference dynamics keeps $X_\tau=X_0$, hence $\mu^{\mathrm{ref}}_1=\mu_0$.
Note, while ODE flow is deterministic, the randomness comes from $X_0$. Minimizing kinetic energy acts as a geometric regularizer that strictly bounds the deviation (in Wasserstein-2 distance) from this prior. 
When $\mu_0$ is uniform over a bounded action domain, this follows a similar principle to maximum-entropy RL, namely discouraging overly concentrated action distributions and encouraging broadly supported, stochastic policies over the action domain, although it does not provide a strict entropy guarantee in the deterministic limit as we discussed in Appendix~\ref{app:proof_benamou_brenier}.

Thus, minimizing kinetic energy consistently enforces closeness to the prior, interpreted as entropic proximity (in SDEs) or geometric proximity (in ODEs). Hence, minimizing this path energy is sufficient to bound the divergence of the terminal action distribution.

\section{Reinforcement Learning as a Generalized Schr$\ddot{\text{o}}$dinger Bridge Problem}
\label{sec:gsb_formulation}
In this section, we formally derive FLAC. We begin by reframing the policy optimization problem not merely as maximizing returns, but as a \emph{Generalized Schrödinger Bridge (GSB)} problem. This perspective unifies the generative dynamics and the exploration objective into a single, coherent physical transport formulation.

\subsection{The Generalized Schr$\ddot{\text{o}}$dinger Bridge Formulation}
\label{sec:gsb_objective}
Standard RL treats the policy as a conditional distribution. Here, we view it as a controlled stochastic process. Following the formulation in \citet{liu2023generalized}, we define our goal as finding a path measure $\mathbb{P}$ on the space of trajectories that minimizes a composite objective: a divergence cost relative to a high-entropy reference process, and a terminal potential cost reflecting the task reward.

Let $\mathbb{P}^{\mathrm{ref}}$ denote a fixed reference path measure (e.g., Brownian motion) starting from a high-entropy prior $\mu_0$ (instantiated as a uniform distribution). We formulate the One-Ended Generalized Schrödinger Bridge problem as
\begin{equation}
\label{eq:gsb_variational}
\begin{aligned}
    \min_{\mathbb{P}} \quad &\mathcal{J}_{\mathrm{GSB}}(\mathbb{P}) \coloneqq
    \alpha \underbrace{\mathcal{D}(\mathbb{P} \Vert \mathbb{P}^{\mathrm{ref}})}_{\text{Divergence Cost}}
    + \underbrace{\mathbb{E}_{X_1 \sim \mathbb{P}} \left[ \mathcal{G}(X_1) \right]}_{\text{Terminal Potential}}
    &\text{s.t.} \quad \mathbb{P}_0 = \mu_0.
\end{aligned}
\end{equation}
This optimization is subject to specific boundary conditions that distinguish it from classical transport problems. First, the process is anchored at a fixed start, constrained to initialize from the reference prior $\mu_0$. Second, unlike the standard Schrödinger Bridge which imposes a hard constraint on the terminal marginal (i.e., forcing $X_1$ to match a data distribution), our formulation is one-ended (or ``free-end''): the terminal distribution $\mathbb{P}_1$ is free to evolve, regularized only by the soft potential $\mathcal{G}(X_1)$.

We analyze the theoretical properties of this formulation. The optimization problem in Eq.~\eqref{eq:gsb_variational} admits a closed-form solution for the terminal marginal distribution.
\begin{proposition}[Optimal GSB Solution]
\label{prop:gsb_optimality}
The  optimal path measure $\mathbb{P}^*$ that minimizes Eq.~\eqref{eq:gsb_variational} induces a terminal marginal distribution $p^*(X_1)$ of the form:
\begin{equation}
    p^*(X_1) \propto \mu_1^{\mathrm{ref}}(X_1) \cdot \exp\left( -\frac{\mathcal{G}(X_1)}{\alpha} \right),
\end{equation}
where $\mu_1^{\mathrm{ref}}(X_1)$ is the marginal distribution of the reference process at $\tau=1$.
\end{proposition}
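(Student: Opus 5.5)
The plan is to reduce the path-space problem to a problem over terminal marginals by disintegrating with respect to $X_1$, and then solve the resulting one-dimensional variational problem with the Gibbs variational principle. Throughout I take $\mathcal{D}$ to be the KL divergence (the stochastic regime of Section~\ref{sec:prelim_path_energy}). I also assume $\mathbb{P}^{\mathrm{ref}}$ already starts from $\mu_0$, so the constraint $\mathbb{P}_0=\mu_0$ is compatible with the reference. Finally, I assume $Z \coloneqq \int \mu_1^{\mathrm{ref}}(x)\, e^{-\mathcal{G}(x)/\alpha}\,dx<\infty$, for instance with $\mathcal{G}$ bounded below on the bounded action domain.

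\textbf{Step 1 (disintegration).} For any $\mathbb{P}\ll\mathbb{P}^{\mathrm{ref}}$, write $\mathbb{P}=\int \mathbb{P}^{x}\,\mathbb{P}_1(dx)$ and $\mathbb{P}^{\mathrm{ref}}=\int \mathbb{P}^{\mathrm{ref},x}\,\mu_1^{\mathrm{ref}}(dx)$, where $\mathbb{P}^{x}$ and $\mathbb{P}^{\mathrm{ref},x}$ are the laws conditioned on $X_1=x$. The chain rule for KL then gives
\[
\KL(\mathbb{P}\Vert\mathbb{P}^{\mathrm{ref}})=\KL(\mathbb{P}_1\Vert\mu_1^{\mathrm{ref}})+\mathbb{E}_{x\sim\mathbb{P}_1}\bigl[\KL(\mathbb{P}^{x}\Vert\mathbb{P}^{\mathrm{ref},x})\bigr].
\]
The terminal potential $\mathbb{E}[\mathcal{G}(X_1)]$ depends only on $\mathbb{P}_1$. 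Hence $\mathcal{J}_{\mathrm{GSB}}(\mathbb{P})\ge \alpha\KL(\mathbb{P}_1\Vert\mu_1^{\mathrm{ref}})+\mathbb{E}_{\mathbb{P}_1}[\mathcal{G}]$. Equality holds when we choose $\mathbb{P}^{x}=\mathbb{P}^{\mathrm{ref},x}$, i.e.\ when the optimal $\mathbb{P}^*$ reuses the reference bridges. Note that this is the same inequality as the data-processing bound of Appendix~\ref{app:proof_dpi_terminal}, now with an explicit slack term.

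\textbf{Step 2 (Gibbs variational principle).} Define the tilted measure $q(x)\coloneqq \mu_1^{\mathrm{ref}}(x)e^{-\mathcal{G}(x)/\alpha}/Z$. A direct expansion of the log-ratio gives, for any $p\ll\mu_1^{\mathrm{ref}}$,
\[
\alpha\KL(p\Vert\mu_1^{\mathrm{ref}})+\mathbb{E}_p[\mathcal{G}]=\alpha\KL(p\Vert q)-\alpha\log Z.
\]
The right-hand side is minimized uniquely at $p=q$ by nonnegativity of KL. Combining this with Step 1, $\mathbb{P}^*(\cdot)=\int \mathbb{P}^{\mathrm{ref},x}(\cdot)\,q(dx)$ is optimal, and its terminal marginal is $p^*=q$, which is the claimed form. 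Equivalently, $d\mathbb{P}^*/d\mathbb{P}^{\mathrm{ref}}=e^{-\mathcal{G}(X_1)/\alpha}/Z$.

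\textbf{Step 3 (initial constraint), the main obstacle.} We must check that $\mathbb{P}^*_0=\mu_0$. The reweighting by a function of $X_1$ in general changes the time-$0$ marginal: it becomes $\mu_0(x_0)\,\mathbb{E}^{\mathrm{ref}}[e^{-\mathcal{G}(X_1)/\alpha}\mid X_0=x_0]/Z$. The honest way to handle this is to also disintegrate over $X_0$ and solve the problem separately for each fixed start $x_0$. That yields $d\mathbb{P}^*/d\mathbb{P}^{\mathrm{ref}}=e^{-\mathcal{G}(X_1)/\alpha}/h(X_0)$, with $h(x_0)=\mathbb{E}^{\mathrm{ref}}[e^{-\mathcal{G}(X_1)/\alpha}\mid X_0=x_0]$ (the Doob $h$-transform). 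The terminal marginal is then $\mu_1^{\mathrm{ref}}$ tilted by $e^{-\mathcal{G}/\alpha}$ and mixed with the weights $1/h(X_0)$.

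To recover exactly the stated form, I would invoke the setting the paper intends. One option is a reference whose terminal value is asymptotically independent of $X_0$, or one in which $h$ is constant; this holds, e.g., for a memoryless/diffusive reference, or for the uniform-to-uniform reference on the bounded domain. The other option is to read the proposition as the unconstrained-start version, where $\mathbb{P}^{\mathrm{ref}}_0=\mu_0$ is only approximately preserved. I expect this reconciliation to be the delicate point; the rest is the standard disintegration and Donsker–Varadhan argument.
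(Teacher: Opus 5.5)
\textbf{Steps 1--2 versus the paper.} Your Steps 1--2 are correct. They reach the same minimizer as the paper, through a different decomposition. The paper works directly on path space. It absorbs the potential into the reference via $d\tilde{\mathbb{Q}}=e^{-\mathcal{G}(X_1)/\alpha}\,d\mathbb{P}^{\mathrm{ref}}$ and rewrites $\mathcal{J}(\mathbb{P})$ as $\alpha$ times a KL divergence to the normalized $\tilde{\mathbb{Q}}$. It then reads off $d\mathbb{P}^*/d\mathbb{P}^{\mathrm{ref}}\propto e^{-\mathcal{G}(X_1)/\alpha}$ and marginalizes at $\tau=1$. You instead disintegrate over $X_1$, use the KL chain rule to reduce the problem to terminal laws, and apply the Gibbs principle there. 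The paper's route is shorter, because $e^{-\mathcal{G}(X_1)/\alpha}$ is already a path functional. Yours also shows three things the paper does not: the optimizer keeps the reference bridges, the slack in the data-processing bound is the conditional KL, and $Z<\infty$ is needed (the paper leaves this implicit).

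\textbf{Step 3: a flaw in the statement, not in your argument.} Your Step 3 identifies a real problem, but it lies in the statement and in the paper's proof. The paper minimizes over all path measures and never imposes $\mathbb{P}_0=\mu_0$. So what both it and your Steps 1--2 prove is the free-start problem. Its optimizer has initial law exactly $\mu_0 h/Z$; this is not ``approximately $\mu_0$'' and can be far from it.

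Your $h$-transform computation for the constrained problem is correct. It gives the already-normalized terminal law
\[
p^*(x)=\mu_1^{\mathrm{ref}}(x)\,e^{-\mathcal{G}(x)/\alpha}\,\mathbb{E}^{\mathrm{ref}}\bigl[h(X_0)^{-1}\,\big|\,X_1=x\bigr].
\]
So the stated form holds if and only if $\mathbb{E}^{\mathrm{ref}}[h(X_0)^{-1}\mid X_1]$ is $\mu_1^{\mathrm{ref}}$-a.e.\ constant.

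Your list of sufficient conditions needs tightening. Two of them work:
\begin{itemize}
\item $X_1$ independent of $X_0$ under the reference;
\item a point-mass $\mu_0$, which is the Schr\"odinger--F\"ollmer setting.
\end{itemize}
Two do not make that factor constant:
\begin{itemize}
\item a merely ``diffusive'' reference;
\item a reference that maps the uniform law to itself.
\end{itemize}
For example, take a Brownian reference started from uniform $\mu_0$, with $\mathcal{G}$ continuous. As $\sigma\to0$, $h(x_0)\to e^{-\mathcal{G}(x_0)/\alpha}$. The constrained optimum's terminal law then tends to $\mu_0$ itself, not to $\mu_0 e^{-\mathcal{G}/\alpha}/Z$.

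\textbf{Conclusion.} Read literally, with the constraint and a uniform $\mu_0$, the proposition is false in general. There are two valid repairs: drop the constraint, as the paper's proof silently does, or add one of the two working conditions above. With either repair, your proof is complete.
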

\begin{proof}
See Appendix~\ref{app:proof_gsb_optimality}.
\end{proof}

Proposition~\ref{prop:gsb_optimality} reveals an exponential-tilting closed form for the optimal terminal marginal.
When $\mu^{\mathrm{ref}}_1$ is approximately uniform over a bounded action domain, the solution reduces to
$p^*(X_1)\propto \exp(-\mathcal{G}(X_1)/\alpha)$.

To connect this general form to RL, we introduce a \emph{state-conditioned} terminal potential $\mathcal{G}_s(X_1)$,
so that the induced terminal marginal defines a policy $\pi(\cdot\mid s)$ over actions $a:=X_1$.
In particular, we will instantiate $\mathcal{G}_s(\cdot)$ using a critic-like, value-informed potential (lower potential
for higher-value actions), yielding a Boltzmann-style policy family like SAC~\cite{haarnoja2018soft}:
\[
\pi(a\mid s)\ \propto\ \mu^{\mathrm{ref}}_1\cdot \exp\!\left(-\frac{\mathcal{G}_s(a)}{\alpha}\right).
\]

\begin{figure*}[t]
    \centering
    \includegraphics[width=1\textwidth]{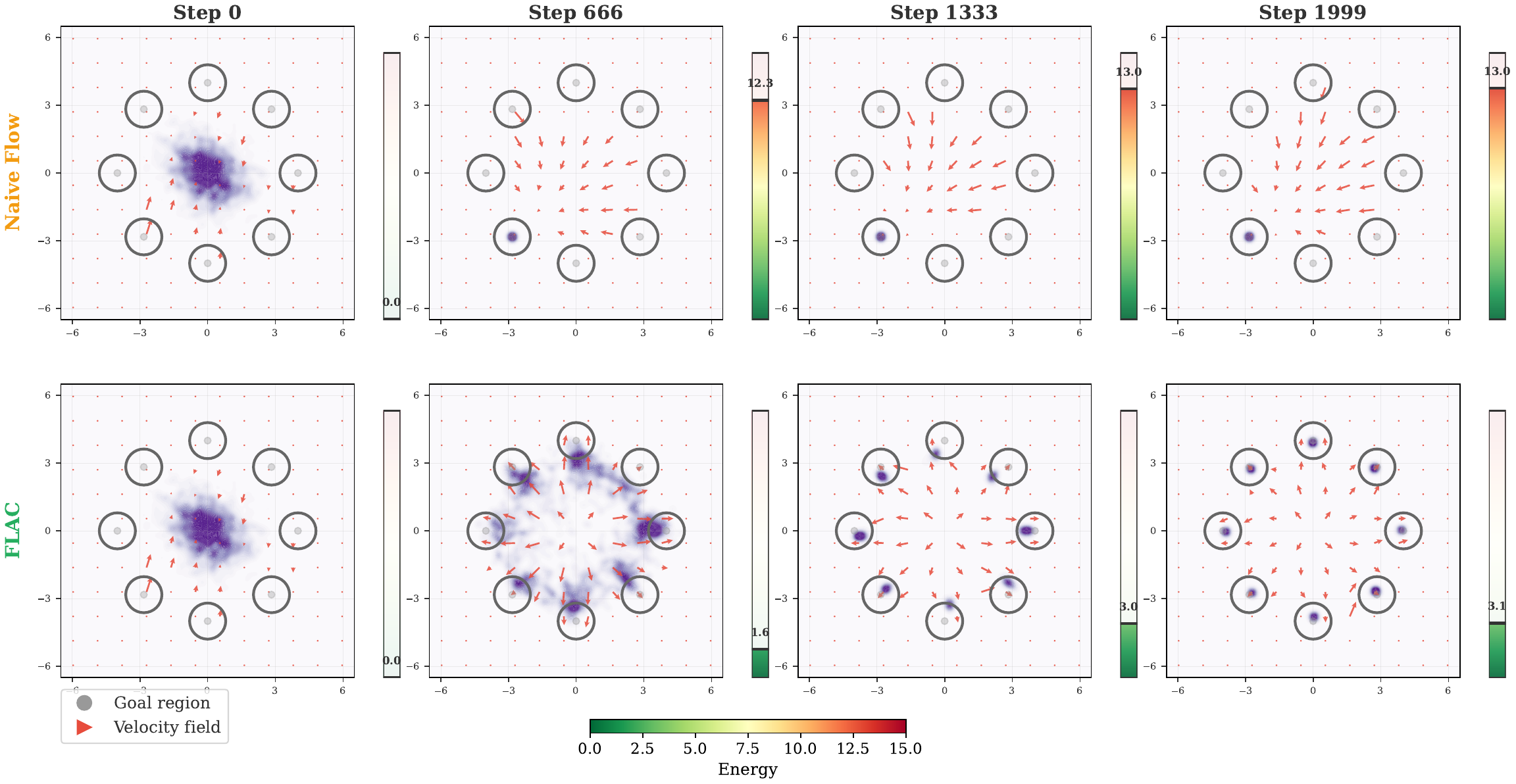}
    \caption{Kinetic Energy Regularization Encourage Exploration. Toy example on a 2D multi-goal landscape. (Top) Unconstrained: The high-velocity field overpowers the intrinsic noise, forcing the policy to collapse into a single deterministic mode. (Bottom) FLAC: By penalizing kinetic energy, the policy is constrained to preserve stochasticity. This low-energy field successfully recovers the full multimodal distribution.}
    \label{fig:toy}
\end{figure*}
\subsection{Energy-Regularized Policy Optimization}
\label{sec:energy_optimization}
While Proposition~\ref{prop:gsb_optimality} characterizes the optimal equilibrium, directly sampling from the unnormalized Boltzmann distribution is intractable in high-dimensional continuous spaces. Therefore, we solve the variational problem (Eq.~\ref{eq:gsb_variational}) directly by parameterizing the generation process and instantiating the abstract GSB components into a tractable RL objective.


\paragraph{Deriving the FLAC Objective.}
First, leveraging the connection established in Section~\ref{sec:prelim_path_energy}, we substitute the abstract divergence term with the expected kinetic energy of the velocity field:
\[
\mathcal{D}(\mathbb{P}^\theta  \Vert \mathbb{P}^{\mathrm{ref}}) \propto \mathbb{E} \left[ \int_0^1 \frac{1}{2} \|u_\theta\|^2 d\tau \right].
\]
Second, to align with the actor-critic framework, we instantiate the terminal potential as the negative \emph{(expected) discounted return} after taking action $a:=X_1$ at state $s$:
\[
\mathcal{G}_s(X_1) \coloneqq - R(s, X_1) = - \mathbb{E} \left[ \sum_{t=0}^{T} \gamma^t r(s_t, a_t)  \right].
\]
Substituting these terms into Eq.~\ref{eq:gsb_variational}, we obtain the training objective for our proposed method, Field Least-Energy Actor-Critic (FLAC):
\begin{equation}
\begin{split}
\min_{\theta} J_{\text{FLAC}}(\theta) & = \mathbb{E}_{\mathbb{P}^\theta} \Bigg[ \underbrace{\alpha \int_0^1 \frac{1}{2} \left\| u_\theta(s, \tau, X_\tau) \right\|^2 d\tau}_{\text{Minimize Kinetic}}  \underbrace{-R(s, X_1)}_{\text{Maximize Return}} \Bigg], \quad \text{s.t.} \,\, X_0 \sim \mu_0.
\end{split}
\label{eq:energy_objective}
\end{equation}

Here, the expectation is taken over the trajectory generated by the current policy. The term ``Least-Kinetic'' reflects the physical intuition of our approach: the kinetic energy term acts as a dynamic regularizer. Since the reference process (Brownian motion) has zero drift (zero kinetic energy), minimizing energy compels the policy to adhere to the intrinsic stochasticity of the reference, exerting effort \emph{only} when necessary to steer towards high-value regions.

To demonstrate the efficacy of this regularization, we visualize the evolution of the learned vector fields on a 2D multi-goal toy environment (Figure~\ref{fig:toy}). In the Naive Flow case (Top), the policy maxmizes reward without regularization. As observed during the learning progress, it learns an aggressive, high-velocity field (depicted by long red arrows) that rapidly concentrates probability mass. This high kinetic energy completely overpowers the  noise, causing the action distribution to suffer from severe mode collapse, capturing only a single goal. In contrast, FLAC (Bottom) penalizes the kinetic energy. The resulting field exerts minimal control effort, indicated by the subtle, low-magnitude field vectors. In the end of training, FLAC successfully maintains sufficient stochasticity to cover all 8 optimal modes, validating our hypothesis that limiting kinetic energy prevents the premature elimination of diverse solution paths.

\section{Field Least-Energy Actor-Critic}
\label{sec:algorithm}

Building on the GSB formulation, we propose \textbf{Field Least-Energy Actor-Critic (FLAC)}, which optimizes a velocity field to transport the prior noise to high-reward regions with minimal kinetic energy. This section details the practical algorithm, deriving a rigorous energy-regularized policy iteration scheme and its implementation with automatic energy tuning.

\subsection{Energy-Regularized Policy Iteration}
\label{sec:pi_scheme}

We incorporate the kinetic energy penalty directly into the Bellman operator. This allows us to extend standard Policy Iteration guarantees to our setting. Analogous to SAC, which derives a soft Bellman backup with an entropy regularizer, we derive an energy-regularized Bellman operator by incorporating the kinetic-energy cost of the action-generation process into the backup.


\paragraph{Policy Evaluation.}
For a fixed policy $\pi$, we define the energy-regularized Bellman evaluation operator
$\mathcal{T}^\pi$ acting on $Q:\mathcal{S}\times\mathcal{A}\to\mathbb{R}$ as
\begin{align}
(\mathcal{T}^\pi Q)(s,a)
\coloneqq
r(s,a)+\gamma\,\mathbb{E}
\big[Q(s',a')-\alpha\,{\mathcal{E}}_\pi(s')\big],
\label{eq:Tpi_def}
\end{align}
where ${\mathcal{E}}_\pi(s')$ denotes the expected kinetic energy required to sample $a'\sim\pi(\cdot\mid s')$.

\begin{proposition}[Convergence of Policy Evaluation]
\label{prop:contraction}

Assume rewards are bounded and the energy term is finite. The operator $\mathcal{T}^\pi$ is a $\gamma$-contraction in the $L^\infty$ norm. Consequently, the iterative update $Q_{k+1} = \mathcal{T}^\pi Q_k$ converges to the unique regularized value function $Q^\pi$.

(Proof in Appendix~\ref{app:proof_contraction})

\end{proposition}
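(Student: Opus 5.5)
The plan is to reproduce the standard argument for the soft Bellman operator in SAC and treat the kinetic-energy term as a policy-dependent reward shift that does not involve $Q$. I would first rewrite Eq.~\eqref{eq:Tpi_def} as
\begin{align*}
(\mathcal{T}^\pi Q)(s,a) = r_\pi(s,a) + \gamma\,\mathbb{E}_{s'\sim p(\cdot\mid s,a),\,a'\sim\pi(\cdot\mid s')}\big[Q(s',a')\big],
\end{align*}
with the augmented reward $r_\pi(s,a) \coloneqq r(s,a) - \alpha\gamma\,\mathbb{E}_{s'\sim p(\cdot\mid s,a)}[\mathcal{E}_\pi(s')]$. The first thing to check is that $r_\pi$ is bounded. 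Rewards are bounded by assumption, and since $\mathcal{E}_\pi\ge 0$, the hypothesis that the energy term is finite will be read as $\sup_{s'}\mathcal{E}_\pi(s')\le E_{\max}<\infty$. This holds, for instance, when $u_\theta$ is uniformly bounded on the compact generation horizon $[0,1]$. Under that reading, $\|r_\pi\|_\infty \le R_{\max} + \alpha\gamma E_{\max}$. Consequently $\mathcal{T}^\pi$ maps the Banach space $(B(\mathcal{S}\times\mathcal{A}),\|\cdot\|_\infty)$ of bounded measurable functions into itself.

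Next comes the contraction estimate. For any $Q_1,Q_2\in B(\mathcal{S}\times\mathcal{A})$, the reward term and the energy term cancel in the difference, leaving
\begin{align*}
\big|(\mathcal{T}^\pi Q_1)(s,a)-(\mathcal{T}^\pi Q_2)(s,a)\big| = \gamma\,\big|\mathbb{E}[Q_1(s',a')-Q_2(s',a')]\big| \le \gamma\,\|Q_1-Q_2\|_\infty .
\end{align*}
The inequality uses only that the expectation is taken with respect to a probability measure, namely the composition of $p(\cdot\mid s,a)$ with $\pi(\cdot\mid s')$. Taking the supremum over $(s,a)$ yields the $\gamma$-contraction, and $\gamma<1$ by assumption.

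Finally, I would invoke the Banach fixed-point theorem. It gives a unique fixed point $Q^\pi\in B(\mathcal{S}\times\mathcal{A})$ and shows that $Q_{k+1}=\mathcal{T}^\pi Q_k$ converges to it in $\|\cdot\|_\infty$ from any bounded initial $Q_0$. The convergence is geometric, with $\|Q_k-Q^\pi\|_\infty\le\gamma^k\|Q_0-Q^\pi\|_\infty$. For interpretation, I would note that $Q^\pi$ equals the expected discounted sum of the energy-penalized rewards $r-\alpha\gamma\mathcal{E}_\pi$, in direct analogy with the soft $Q$-function.

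I expect the only delicate point to be the boundedness and measurability of $s'\mapsto\mathcal{E}_\pi(s')$, since without a uniform bound $\mathcal{T}^\pi$ need not preserve $B(\mathcal{S}\times\mathcal{A})$. I would handle this by stating the uniform-bound interpretation explicitly and, if needed, justifying it from a bounded-drift assumption on $u_\theta$.
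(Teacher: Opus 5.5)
Your proposal is correct and follows essentially the paper's argument. The reward and energy terms cancel in the difference, the expectation under a probability measure gives the factor $\gamma\|Q_1-Q_2\|_\infty$, and a fixed-point theorem then yields existence, uniqueness and convergence. Your explicit reading of ``finite energy'' as $\sup_{s'}\mathcal{E}_\pi(s')<\infty$, which ensures that $\mathcal{T}^\pi$ maps bounded functions to bounded functions, makes rigorous a step the paper leaves implicit. Your augmented-reward description of $Q^\pi$ agrees with the paper's unrolled expression.
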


\paragraph{Policy Improvement.}
Given the value function $Q^\pi$, we update the policy to maximize the regularized objective. This corresponds to finding a policy that maximizes the expected Q-value while minimizing its generation energy:
\begin{align}
\pi\leftarrow \arg\max_{\pi} \mathbb{E}_{s \sim \mathcal{D}} \left[ \mathbb{E}_{a \sim \pi(\cdot \mid s)} [Q^\pi(s, a)] - \alpha {\mathcal{E}}_\pi(s) \right].
\end{align}
\begin{proposition}[Monotonic Improvement]
\label{prop:improvement}
The update rule guarantees monotonic improvement of the generalized objective, i.e., $J_{\mathrm{GSB}}(\pi_{\text{new}}) \ge J_{\mathrm{GSB}}(\pi)$. This drives the policy towards the optimal transport plan that balances reward maximization and entropic exploration.

(Proof in Appendix~\ref{app:proof_improvement})
\end{proposition}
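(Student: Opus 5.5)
The plan follows the SAC soft policy improvement argument (Haarnoja et al.), with the entropy bonus replaced by the negative kinetic energy $-\alpha\,\mathcal{E}_\pi(s)$. Because the proposition says $J_{\mathrm{GSB}}(\pi_{\text{new}})\ge J_{\mathrm{GSB}}(\pi)$, we read $J_{\mathrm{GSB}}$ as the reward-type objective, i.e. the negative of the cost in Eq.~\eqref{eq:gsb_variational}. Concretely, the proof will establish $Q^{\pi_{\text{new}}}(s,a)\ge Q^{\pi}(s,a)$ for all $(s,a)$, where $Q^\pi$ is the fixed point of $\mathcal{T}^\pi$ given by Proposition~\ref{prop:contraction}. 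Monotonicity of $J_{\mathrm{GSB}}(\pi)=\mathbb{E}_{s}\big[\mathbb{E}_{a\sim\pi}[Q^\pi(s,a)]-\alpha\mathcal{E}_\pi(s)\big]$ then follows, because the per-state soft value is monotone in $Q$.

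\textbf{Step 1 (one-step inequality).} By definition of the improvement step, $\pi_{\text{new}}$ maximizes $\mathbb{E}_{a\sim\pi'(\cdot\mid s)}[Q^\pi(s,a)]-\alpha\mathcal{E}_{\pi'}(s)$ over the admissible velocity fields. The current $\pi$ is itself admissible, so for every state $s$
\begin{equation*}
\mathbb{E}_{a\sim\pi_{\text{new}}}[Q^\pi(s,a)]-\alpha\mathcal{E}_{\pi_{\text{new}}}(s)\;\ge\;\mathbb{E}_{a\sim\pi}[Q^\pi(s,a)]-\alpha\mathcal{E}_{\pi}(s)\;=:\;V^\pi(s).
\end{equation*}
To use this pointwise in $s$, we assume the argmax is taken statewise. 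This holds when the policy class is rich enough, or when $s$ ranges over the support of $\mathcal{D}$. This assumption is the one point that must be stated explicitly.

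\textbf{Step 2 (unrolling).} Starting from $Q^\pi(s,a)=r(s,a)+\gamma\mathbb{E}_{s'}[V^\pi(s')]$, we apply Step 1 at $s'$. This gives
\begin{equation*}
Q^\pi\le r+\gamma\mathbb{E}_{s'}\big[\mathbb{E}_{a'\sim\pi_{\text{new}}}Q^\pi(s',a')-\alpha\mathcal{E}_{\pi_{\text{new}}}(s')\big]=\mathcal{T}^{\pi_{\text{new}}}Q^\pi .
\end{equation*}
The operator $\mathcal{T}^{\pi_{\text{new}}}$ is monotone: if $Q\le Q'$ then $\mathcal{T}^{\pi_{\text{new}}}Q\le\mathcal{T}^{\pi_{\text{new}}}Q'$, since the energy term does not depend on $Q$. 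Iterating therefore yields $Q^\pi\le(\mathcal{T}^{\pi_{\text{new}}})^kQ^\pi$ for every $k$. By Proposition~\ref{prop:contraction}, the right-hand side converges to $Q^{\pi_{\text{new}}}$, so $Q^\pi\le Q^{\pi_{\text{new}}}$. Boundedness of rewards and finiteness of the energy keep every quantity finite and justify the limit.

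\textbf{Step 3 (objective).} We combine the two previous steps:
\begin{equation*}
V^{\pi_{\text{new}}}(s)=\mathbb{E}_{\pi_{\text{new}}}Q^{\pi_{\text{new}}}-\alpha\mathcal{E}_{\pi_{\text{new}}}\ge\mathbb{E}_{\pi_{\text{new}}}Q^{\pi}-\alpha\mathcal{E}_{\pi_{\text{new}}}\ge V^\pi(s).
\end{equation*}
Taking the expectation over states gives $J_{\mathrm{GSB}}(\pi_{\text{new}})\ge J_{\mathrm{GSB}}(\pi)$.

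\textbf{Remaining claim and main obstacle.} The sentence about convergence to the optimal transport plan would be argued as in SAC. The sequence $Q^{\pi_k}$ is monotone and bounded, so it converges. At the limit no strict improvement is possible. Via Proposition~\ref{prop:gsb_optimality} and the KL–energy identity of Section~\ref{sec:prelim_path_energy}, this fixed point corresponds to the tilted terminal marginal $\propto\mu^{\mathrm{ref}}_1\exp(Q/\alpha)$. The main obstacle is this identification, not the improvement inequality. The kinetic energy equals a path-space KL only in the SDE regime, with the scaling $\KL(\mathbb{P}^{\theta}\Vert\mathbb{P}^{\mathrm{ref}})=\mathcal{E}(s)/\sigma^2$. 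So we must absorb $\sigma^2$ into $\alpha$ and state that optimality is with respect to the energy-regularized objective over the parameterized class. We will not claim an exact Boltzmann policy in the ODE limit.
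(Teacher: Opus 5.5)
Your Steps 1--2 are the paper's proof. The paper assumes the improvement inequality at every state, derives $\mathcal{T}^{\pi_{\mathrm{new}}}Q^\pi\ge\mathcal{T}^{\pi}Q^\pi=Q^\pi$, and then uses monotonicity of $\mathcal{T}^{\pi_{\mathrm{new}}}$ and the contraction from Proposition~\ref{prop:contraction} to conclude $Q^{\pi_{\mathrm{new}}}\ge Q^\pi$. The paper stops at this $Q$-inequality and calls it monotonic improvement. Your Step 3 supplies the missing passage to $\mathbb{E}_s[V^{\pi_{\mathrm{new}}}(s)]\ge\mathbb{E}_s[V^{\pi}(s)]$, and it is correct for any state distribution that does not depend on the policy. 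Your reading of $J_{\mathrm{GSB}}$ as a reward is needed, since Eq.~\eqref{eq:gsb_variational} is a cost to be minimized.

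There are two cautions. First, in Step 1, taking the argmax only over the support of $\mathcal{D}$ is not enough by itself. Step 2 uses the inequality at every successor $s'\sim p(\cdot\mid s,a)$ for every $(s,a)$, so you need it at all states, as the paper simply assumes. Alternatively, you need it $\mathcal{D}$-a.e.\ together with $p(\cdot\mid s,a)\ll\mathcal{D}$.

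Second, in your sketch of the remaining claim, identifying the limit with a marginal $\propto\mu^{\mathrm{ref}}_1\exp(Q/\alpha)$ via Proposition~\ref{prop:gsb_optimality} fails in general. That proposition's proof drops the constraint $\mathbb{P}_0=\mu_0$, which every drift-controlled policy process satisfies. Write $\mathbb{P}^{x_0}$ for the law conditioned on $X_0=x_0$. Under the constraint, $\KL(\mathbb{P}\Vert\mathbb{P}^{\mathrm{ref}})=\mathbb{E}_{x_0\sim\mu_0}\KL(\mathbb{P}^{x_0}\Vert\mathbb{P}^{\mathrm{ref},x_0})$, so the optimum tilts each conditional separately: $d\mathbb{P}^*/d\mathbb{P}^{\mathrm{ref}}=e^{Q(X_1)/\alpha'}/Z(X_0)$. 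Here $Z(x_0)=\mathbb{E}_{\mathrm{ref}}[e^{Q(X_1)/\alpha'}\mid X_0=x_0]$ and $\alpha'=\alpha\sigma^2$. The resulting terminal marginal is $\int\mu_0(dx_0)\,p^{\mathrm{ref}}(x_1\mid x_0)\,e^{Q(x_1)/\alpha'}/Z(x_0)$. This generally differs from the tilted $\mu^{\mathrm{ref}}_1$. The two agree when $Z$ is $\mu_0$-a.s.\ constant, for example when $\mu_0$ is a point mass or when $X_1$ is independent of $X_0$ under the reference. Your fallback claim is the one you can actually support: the limit is optimal for the energy-regularized objective over the parameterized class.
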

\subsection{Practical Implementation}
\label{sec:implementation}

We instantiate the above framework into a practical off-policy actor-critic algorithm. We parameterize the vector field $u_\theta(s, \tau, X_\tau)$ (Actor) and the state-action value function $Q_\psi(s, a)$ (Critic).

\paragraph{Critic Update.}
The critic is trained to minimize the Bellman residual derived from Eq.~\eqref{eq:Tpi_def}. To estimate the target value, we sample the next action $a'$ from the current policy at state $s'$ using a numerical solver, and simultaneously compute its discretized kinetic energy $\widehat{\mathcal{E}}_\theta(s')$. The target value $y$ is constructed as:
\begin{align}
y = r + \gamma \left( \min_{i=1,2} Q_{\bar{\psi}_i}(s', a') - \alpha\widehat{\mathcal{E}}_\theta(s') \right),
\end{align}
where $Q_{\bar{\psi}_i}$ are the target critic networks. The critic parameters $\psi$ are updated by minimizing the Bellman Error.

\paragraph{Actor Update.}
The actor updates $\theta$ to maximize the improvement objective. Since the action $a_\theta$ is generated via a differentiable solver, we can backpropagate gradients from the critic through the entire generation trajectory (pathwise derivative). The actor loss is:
\begin{equation}
    J_\pi(\theta) = \mathbb{E}_{s \sim \mathcal{B}} \left[ \alpha \widehat{\mathcal{E}}_\theta(s) - Q_\psi(s, a) \right],
    \label{eq:policy_objective}
\end{equation}
where $a \sim \pi_\theta(\cdot|s)$. Minimizing this loss encourages the velocity field to find trajectories that lead to high-value actions while maintaining low kinetic energy.

\subsection{Automatic Energy Tuning}
\label{sec:auto_tuning}

Selecting a fixed regularization coefficient $\alpha$ is challenging, as the magnitude of kinetic energy varies significantly across different tasks and training stages. A fixed $\alpha$ may lead to over-exploration  or premature convergence to deterministic behavior.

To address this, we formulate the energy regulation as a constrained optimization problem. Instead of manually tuning the penalty weight, we specify a target energy budget $E_{\mathrm{tgt}}$, representing the desired level of stochasticity in the generation process. The objective is to maximize the expected return subject to the constraint that the average kinetic energy remains below this threshold:
\begin{align}
\max_{\pi} \mathbb{E}_{s \sim \mathcal{D}, a \sim \pi} [Q^\pi(s, a)] \quad \text{s.t.} \quad \mathbb{E}_{s \sim \mathcal{D}}[\widehat{\mathcal{E}}_\pi(s)] \le E_{\mathrm{tgt}}.
\end{align}
We solve this constrained problem via the Lagrangian dual method. We construct the Lagrangian with respect to a learnable multiplier $\alpha \ge 0$:
\begin{align}
\min_{\alpha \ge 0} \max_{\pi} \mathcal{L}(\pi, \alpha) = \mathbb{E} \left[ Q^\pi(s, a) - \alpha (\widehat{\mathcal{E}}_\pi(s) - E_{\mathrm{tgt}}) \right].
\end{align}
The optimization of the policy $\pi$ (Actor Update) corresponds to maximizing $\mathcal{L}$ with respect to $\pi$, which recovers the energy-regularized objective in Eq.~\eqref{eq:policy_objective}. For the multiplier $\alpha$, we minimize the dual objective:
\begin{align}
J(\alpha) = \mathbb{E}_{s \sim \mathcal{D}} \left[ \alpha \cdot (E_{\mathrm{tgt}} - \widehat{\mathcal{E}}_\pi(s)) \right].
\end{align}
In practice, to ensure positivity, we parameterize the multiplier as $\alpha = \exp(\log\alpha)$ and update the log-multiplier $\log\alpha$ via gradient descent:
\begin{align}
\log\alpha \leftarrow \log\alpha - \beta \cdot \mathbb{E}_{s \sim \mathcal{B}} \left[ E_{\mathrm{tgt}} - \mathrm{stopgrad}(\widehat{\mathcal{E}}_\theta(s)) \right].
\end{align}
where $\beta$ is the learning rate.

This mechanism functions as a dynamic regulator for policy stochasticity. When the policy becomes too deterministic, $\alpha$ increases, forcing the generation process to adhere more closely to the high-entropy prior. Conversely, when the policy is sufficiently stochastic, $\alpha$ decreases, allowing the agent to pursue aggressive, high-reward trajectories. 

\section{Experiment}

To comprehensively evaluate the effectiveness and generality of \textbf{FLAC }, we conduct experiments on a diverse set of challenging tasks from DMControl~\citep{tassa2018deepmind} and HumanoidBench~\citep{sferrazza2024humanoidbench}. These benchmarks encompass high-dimensional locomotion and human-like robot (Unitree H1) control tasks. Our evaluation aims to answer the following key questions:
\begin{itemize}[leftmargin=*, nosep]
\item \textbf{Q1:} How does FLAC compare against state-of-the-art model-free and model-based baselines in terms of sample efficiency and asymptotic performance on high-dimensional continuous control tasks?
\item \textbf{Q2:} Does the proposed kinetic energy regularization effectively regulate policy stochasticity and improve performance?
\item \textbf{Q3:} How sensitive is FLAC to its key hyperparameters, specifically the target energy budget, and does the automatic Lagrangian tuning mechanism outperform fixed regularization schemes?
\end{itemize}


We compare FLAC against two categories of strong baselines:
\begin{itemize}[leftmargin=*, nosep]
    \item \textbf{Model-free RL:} We include deterministic policies (TD7~\citep{fujimoto2023sale}), standard Gaussian policies (SAC~\citep{haarnoja2018soft}), and recent diffusion/flow-based methods (DIME~\citep{celik2025dime}, SAC-FLOW~\citep{zhang2025sac}, and FlowRL~\citep{lv2025flow}).
    \item \textbf{Model-based RL:} We include TD-MPC2~\citep{hansen2023td}, a leading model-based algorithm across different benchmarks, to benchmark the asymptotic performance limits. Note that model-based methods are not directly comparable to model-free approaches due to differences in underlying assumptions and access to environment dynamics; TD-MPC2 is included as a reference for asymptotic performance.
\end{itemize}
\subsection{Main Results}

\begin{figure*}[htbp]
    \centering
    \includegraphics[width=1\textwidth]{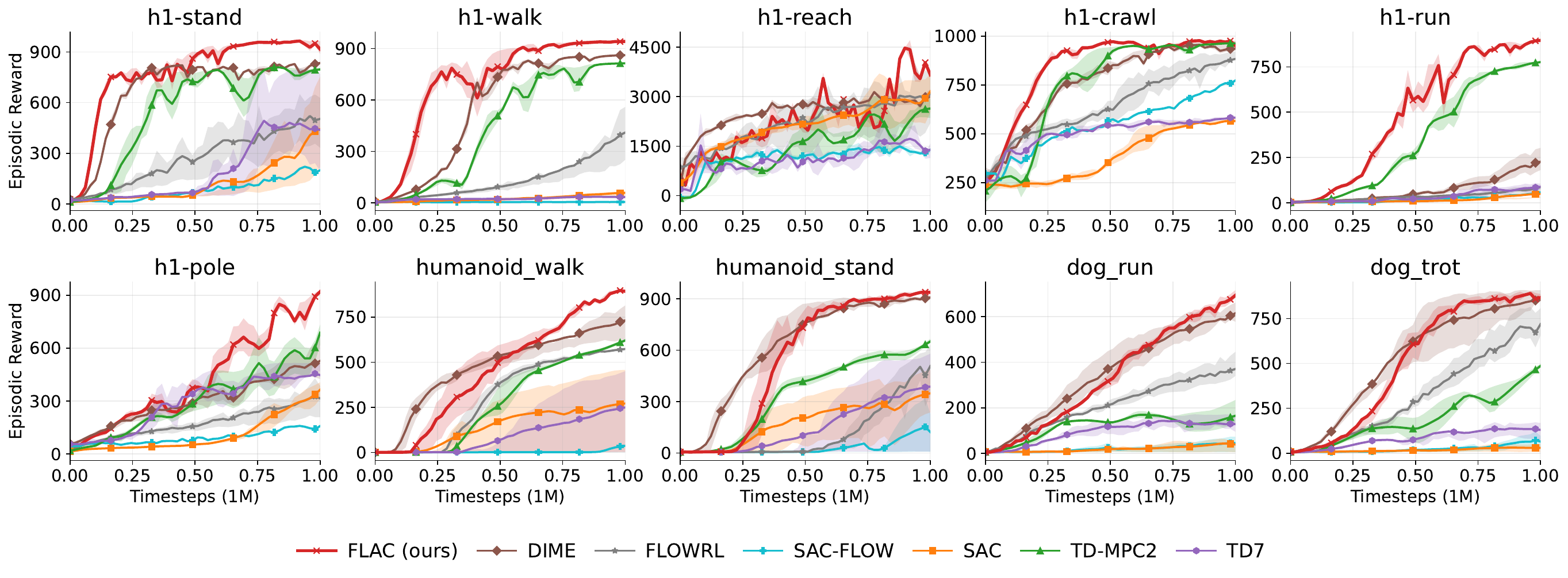}
    \caption{Main results. We provide performance comparisons on two challenging benchmarks. For comprehensive results, please refer to Appendix~D. All model-free algorithms are evaluated with 5 random seeds, while the model-based algorithm (TD-MPC2) uses 3 seeds. DIME incorporates cross Q-learning~\citep{simmons2019q} to boost performance, whereas FLAC does not rely on these enhancements.}
    \label{fig:main}
\end{figure*}

\textbf{Performance across Environments.} 
Figure~\ref{fig:main} presents the comparative learning curves across diverse continuous control tasks. We observe that \textbf{FLAC} consistently matches or exceeds strong model-free baselines. This robustness extends to high-dimensional state spaces, specifically in the DMC Dog domain ($s \in \mathbb{R}^{223}, a \in \mathbb{R}^{38}$) and the contact-rich HumanoidBench Unitree H1 task. Furthermore, compared to the model-based benchmark TD-MPC2~\citep{hansen2023td}, FLAC attains comparable asymptotic returns, achieving this within a model-free framework that bypasses the need for world model learning or online planning.

\textbf{Comparison with Other Diffusion/Flow-based Policies.}
When compared with prior diffusion-based and flow-based policies, FLAC demonstrates superior or comparable asymptotic performance relative to strong baselines such as DIME~\citep{celik2025dime} and SAC-Flow~\citep{zhang2025sac}. FLAC attains these results using $N=2$ number of function evaluations (NFE) per action throughout training and evaluation. In contrast, these baselines typically require more discretization steps to approximate the policy, with DIME using $N=16$ and SAC-Flow using $N=4$. Moreover, DIME further benefits from cross Q-learning~\citep{simmons2019q} as an additional performance enhancement, whereas FLAC does not rely on this technique.


\subsection{Ablation Studies}
\label{sec:ablation}

\begin{figure*}[htbp]
  \centering
  \begin{subfigure}[t]{0.32\textwidth}
    \centering
    \includegraphics[width=\linewidth]{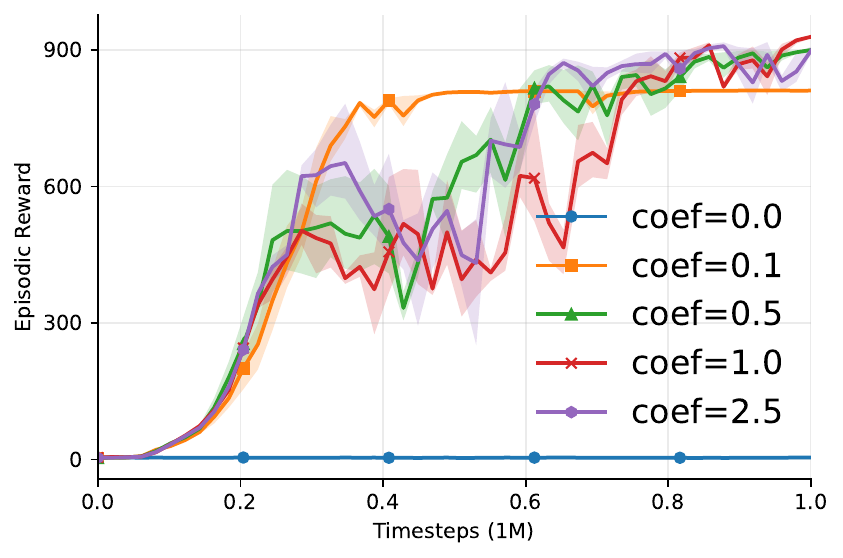}
    \caption{} 
    \label{fig:ablation_budget}
  \end{subfigure}%
  \hfill
  \begin{subfigure}[t]{0.65\textwidth}
    \centering
    \includegraphics[width=0.49\linewidth]{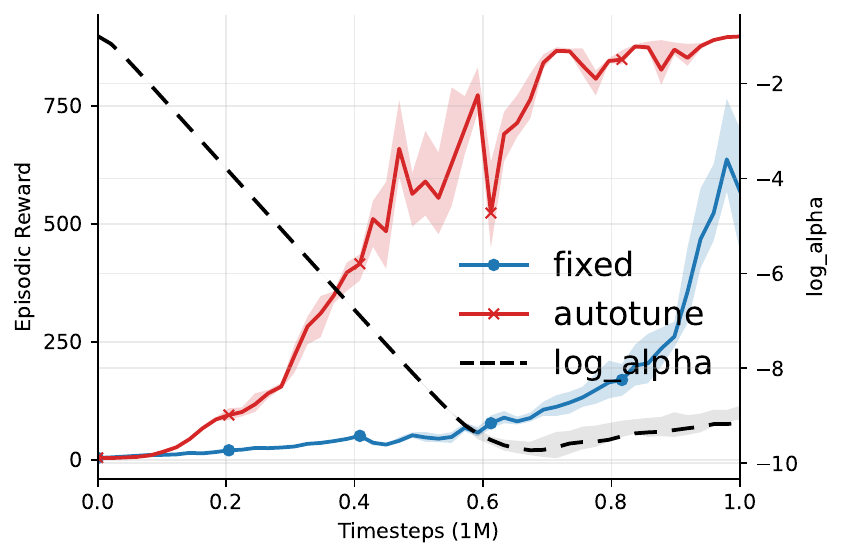}%
    \hfill
    \includegraphics[width=0.49\linewidth]{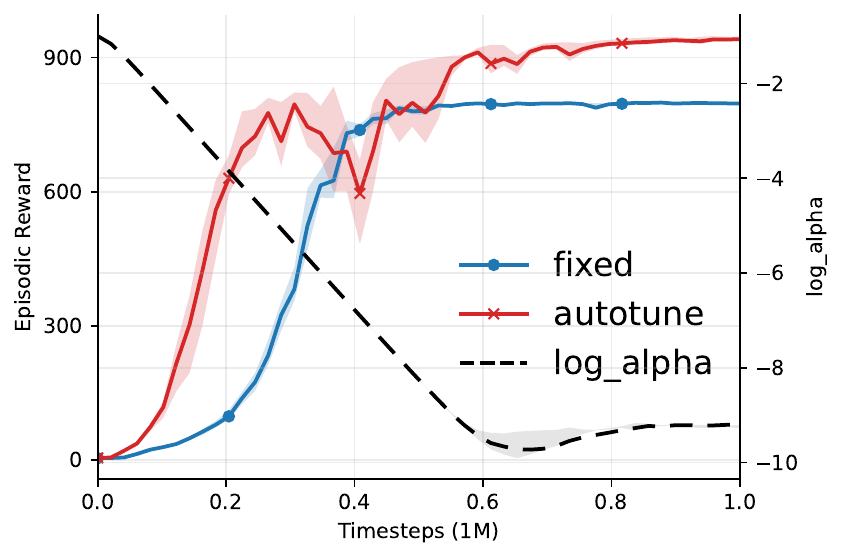}
    \caption{} 
    \label{fig:autotune_group}
  \end{subfigure}

  \vspace{-5pt} 

  \caption{Ablation Studies. 
  \textbf{(a)} Sensitivity to target energy budget $E_{\mathrm{tgt}}$ on h1-walk task. FLAC maintains high performance across a wide range of budgets, indicating robustness. 
  \textbf{(b)} Efficacy of automatic Lagrangian tuning on h1-run (left) and h1-walk (right). Evolution of $\log\alpha$ during training shows a ``decrease-then-increase'' pattern, indicating that FLAC automatically relaxes constraints for early learning and tightens them later to enforce exploration.}
  \label{fig:all_figures}
\end{figure*}

To rigorously verify the robustness and the internal mechanism of FLAC, we conduct two sets of ablation studies.

\noindent \textbf{Sensitivity to Target Energy Budget.}
We first investigate the sensitivity of FLAC to the target energy budget $E_{\mathrm{tgt}}$.
As shown in Appendix~\ref{app:target_energy_heuristic}, under an isotropic action-generation prior the expected kinetic energy scales approximately linearly with the action dimension, motivating a dimension-normalized parametrization
$E_{\mathrm{tgt}} = \mathcal{C}\cdot \dim(\mathcal{A}).$
We evaluate performance across a wide range of coefficients $\mathcal{C} \in \{0, 0.1, 0.5, 2.5\}$.

As shown in Figure~\ref{fig:ablation_budget}, FLAC exhibits robustness, maintaining high performance across a broad range of energy budgets. Significant performance degradation is observed  when the budget is  tight ($\mathcal{C} \in \{0, 0.1\}$). Specifically, the limiting case of $\mathcal{C}=0$ corresponds to a vanishing kinetic energy budget. In this regime, the regulation mechanism strictly suppresses the learned velocity field, compelling the policy to be fully random. The resulting poor performance is theoretically expected and empirically validates the efficacy of our kinetic energy constraint, confirming that the mechanism effectively governs the deviation from the prior.Beyond this extreme regime, the exact value of $E_{\mathrm{tgt}}$ is not critical, simplifying hyperparameter tuning.

\noindent \textbf{Efficacy and Dynamics of Automatic Tuning.}

To understand FLAC's automatic tuning, we compare it against fixed regularization schemes. Figure~\ref{fig:autotune_group} confirms that the adaptive method consistently outperforms static settings, which typically suffer from either restrictive priors or instability due to insufficient regularization. The evolution of $\log \alpha$ further reveals a distinct ``decrease-then-increase'' pattern: initially relaxing constraints to facilitate aggressive value maximization, then tightening them to force the policy geometrically closer to the prior, thereby preventing mode collapse.

Furthermore, the evolution of the learnable multiplier $\log \alpha$ (shown in Figure~\ref{fig:autotune_group}) reveals the inner workings of FLAC. We observe a distinct trend where $\log \alpha$ initially decreases and subsequently increases. During the early stages, the penalty decreases; this relaxation allows the agent to prioritize value maximization by reaching high-reward regions. In the later stages, however, $\log \alpha$ increases, tightening the kinetic energy constraint. By forcing the generation flow to maintain lower energy, the mechanism pulls the policy geometrically closer to the high-entropy prior. Consequently, this process actively enhances exploration as the policy converges, effectively preventing premature mode collapse. This dynamic behavior firmly validates our hypothesis: the kinetic energy regularization serves as an active, state-aware regulator that automatically transitions the agent from aggressive learning to entropy-constrained convergence.

\section{Conclusions}
In this work, we introduced \textbf{Field Least-Energy Actor-Critic (FLAC)}, establishing a unified perspective that maps Reinforcement Learning onto the Generalized Schrödinger Bridge framework. We theoretically demonstrated that the Maximum Entropy principle naturally emerges from minimizing kinetic energy, which acts as a computable geometric proxy for bounding the divergence from the reference process without explicit density estimation. Empirically, FLAC demonstrates highly competitive performance against strong baselines. However, similar to standard maximum entropy approaches, our current framework applies an isotropic regularization across all action dimensions. This treats distinct actuators uniformly, leaving for future work in developing anisotropic or state-dependent energy constraints to better accommodate tasks where varying degrees of stochasticity are required across different control channels.

\clearpage

\bibliographystyle{plainnat}
\bibliography{main}

\clearpage

\beginappendix

\section{Proofs in the Main Text}

\label{app:proofs}

\paragraph{Notation.}
In this appendix, we denote the generic distance by $D(\cdot\|\cdot)$. We analyze the Kinetic Energy $\mathcal{E}(u) = \mathbb{E}[\int_0^1 \frac{1}{2}\|u_\tau\|^2 d\tau]$ in both stochastic and deterministic regimes.
\paragraph{Technical Assumptions.}

To ensure the well-posedness of the theoretical results, we make the following standard assumptions throughout the paper:

\begin{enumerate}
    \item \textbf{Regularity of Drift:} The vector field $u_\theta(s, \tau, x)$ is Lipschitz continuous in $x$ and adapted to the filtration. It satisfies the Novikov condition $\mathbb{E}[\exp(\frac{1}{2\sigma^2}\int \|u\|^2 d\tau)] < \infty$, ensuring the validity of the Girsanov transformation.

    \item \textbf{Boundedness:} The action space $\mathcal{A}$ is bounded (e.g., $[-1, 1]^d$), and the reward function $r(s, a)$ is bounded. The reference prior $\mu_0$ is uniform over $\mathcal{A}$.

    \item \textbf{Absolute Continuity:} The policy distribution $\pi(\cdot|s)$ is absolutely continuous with respect to the reference prior $\mu_0$ (i.e., $\pi \ll \mu_0$), ensuring the KL divergence is well-defined.
\end{enumerate}

\subsection{Stochastic Regime: Energy as KL Divergence}
\label{app:proof_girsanov_rep}

We derive the equivalence between KL divergence and Kinetic Energy for SDEs ($\sigma > 0$).

\paragraph{Setup.}
Let $\mathbb{P}^{\mathrm{ref}}_s$ be the reference path measure induced by $dX_\tau = \sigma dW_\tau$.
Let $\mathbb{P}^\theta_s$ be the policy path measure induced by $dX_\tau = u_\theta d\tau + \sigma dW_\tau$.
Both share the initial distribution $X_0 \sim \mu_0$.

\paragraph{Derivation.}
Define $\beta_\tau \coloneqq \frac{1}{\sigma} u_\theta(s, \tau, X_\tau)$. By Girsanov's Theorem, the log-Radon-Nikodym derivative is:

\begin{align}
\log \frac{d\mathbb{P}^\theta_s}{d\mathbb{P}^{\mathrm{ref}}_s} = \int_0^1 \beta_\tau^\top dW_\tau - \frac{1}{2} \int_0^1 \|\beta_\tau\|^2 d\tau.
\end{align}

Under the measure $\mathbb{P}^\theta_s$, we can rewrite $dW_\tau = d\widetilde{W}_\tau + \beta_\tau d\tau$, where $\widetilde{W}_\tau$ is a standard Brownian motion. Substituting this back:

\begin{align}
\log \frac{d\mathbb{P}^\theta_s}{d\mathbb{P}^{\mathrm{ref}}_s} = \int_0^1 \beta_\tau^\top d\widetilde{W}_\tau + \frac{1}{2} \int_0^1 \|\beta_\tau\|^2 d\tau.
\end{align}

Taking the expectation $\mathbb{E}_{\mathbb{P}^\theta_s}$, the stochastic integral (martingale) term vanishes:

\begin{align}
\KL(\mathbb{P}^\theta_s \Vert \mathbb{P}^{\mathrm{ref}}_s) = \mathbb{E}_{\mathbb{P}^\theta_s} \left[ \frac{1}{2} \int_0^1 \|\beta_\tau\|^2 d\tau \right] = \frac{1}{\sigma^2} \mathcal{E}(u).
\end{align}

\hfill $\square$

\subsection{Deterministic Regime: Energy as Wasserstein-2 Distance}
\label{app:proof_benamou_brenier}

We show that in the ODE limit ($\sigma \to 0$), the Kinetic Energy bounds the Wasserstein-2 distance.

\paragraph{Setup.}
Consider the continuity equation describing the evolution of the probability density $\rho_\tau$ driven by the vector field $u_\tau$:
\begin{align}
\partial_\tau \rho_\tau + \nabla \cdot (\rho_\tau u_\tau) = 0.
\end{align}

The Benamou-Brenier formula~\citep{benamou2000computational} states that the squared Wasserstein-2 distance between two distributions $\mu_0$ and $\mu_1$ is the infimum of the kinetic energy over all valid velocity fields transporting $\mu_0$ to $\mu_1$:

\begin{align}
\mathcal{W}_2^2(\mu_0, \mu_1) = \inf_{(v, \rho)} \left\{ \int_0^1 \int_{\mathbb{R}^d} \|v(x, \tau)\|^2 \rho(x, \tau) dx d\tau \right\},
\end{align}

subject to the continuity equation and boundary conditions $\rho_0=\mu_0, \rho_1=\mu_1$.

\paragraph{Connection to FLAC.}
Our learned policy $u_\theta$ generates a specific flow that transports $\mu_0$ to a terminal distribution $\pi_\theta = \rho_1$. By definition, the energy of our specific flow $\mathcal{E}(u_\theta)$ is one candidate in the set of all possible transport plans. Therefore, it serves as an upper bound on the optimal transport cost:

\begin{align}
\mathcal{W}_2^2(\mu_0, \pi_\theta) \le 2\mathcal{E}(u_\theta).
\end{align}

Minimizing $\mathcal{E}(u_\theta)$ thus minimizes the upper bound on the geometric distance between the prior $\mu_0$ and the policy $\pi_\theta$. Moreover, when $\mu_0$ is a uniform distribution, this objective is related to the maximum entropy objective which pushing policy close to a uniform distribution.

\paragraph{Remark (ODE limit and entropy).}
In the deterministic (ODE) limit, controlling the deviation from a uniform prior in $\mathcal{W}_2$ is a geometric proximity constraint and does not, in general, imply a large terminal (differential) entropy. 

Nevertheless, in continuous-control RL the practical role of maximum-entropy regularization is often to prevent premature policy concentration and early commitment to suboptimal modes (i.e., poor local optima), by maintaining broadly supported stochastic action sampling and sustained exploration.

From this perspective, this energy/$\mathcal{W}_2$ regularization provides a useful surrogate: it penalizes aggressive, large-scale transport (high control effort), which empirically discourages rapid concentration of probability mass and promotes coverage of the bounded action domain.

Moreover, the theoretical constructions that decouple $\mathcal{W}_2$-proximity from distributional spread typically rely on extreme local volume compression, and are often associated with highly non-uniform Jacobians of the induced flow.
In practice, such behaviors are less likely to be realized under our policy parameterization and training dynamics: neural networks trained with first-order methods exhibit an empirical bias toward smoother, low-complexity solutions (often referred to as spectral bias), and the resulting learned transports tend to remain relatively regular under our energy regularization.
Accordingly, in the deterministic regime we view the energy/$\mathcal{W}_2$ constraint as a geometric inductive bias that empirically mitigates global collapse and encourages broadly supported action sampling, rather than as a strict information-theoretic bound.

\hfill $\square$

\subsection{Proof of Terminal Entropy Control}
\label{app:proof_dpi_terminal}

We prove that minimizing path divergence controls the terminal distribution.

Let $\Pi(X_{0:1}) = X_1$ be the projection to the terminal state.
Let $\pi_\theta = \mathbb{P}^\theta_s \circ \Pi^{-1}$ and $\mu^\mathrm{ref}_1 = \mathbb{P}^{\mathrm{ref}}_s \circ \Pi^{-1}$.

By the Data Processing Inequality (DPI) for f-divergences (including KL):

\begin{align}
\KL(\pi_\theta \Vert \mu^\mathrm{ref}_1) \le \KL(\mathbb{P}^\theta_s \Vert \mathbb{P}^{\mathrm{ref}}_s).
\end{align}

Combining this with the result from Appendix~\ref{app:proof_girsanov_rep}, we have:

\begin{align}
\KL(\pi_\theta \Vert \mu^\mathrm{ref}_1) \le \frac{1}{\sigma^2} \mathcal{E}(s).
\end{align}

Thus, minimizing Kinetic Energy forces the terminal policy $\pi_\theta$ to remain close to the high-entropy prior $\mu^\mathrm{ref}_1$.Moreover, when $\mu^\mathrm{ref}_1$ is a uniform distribution, this objective is related to the maximum entropy objective.

\hfill $\square$

\subsection{Proof of Proposition~\ref{prop:gsb_optimality} (Optimal GSB Solution)}
\label{app:proof_gsb_optimality}
\textbf{Proposition Restatement.} \textit{The unique optimal path measure $\mathbb{P}^*$ that minimizes the One-Ended GSB objective (Eq.~\ref{eq:gsb_variational}) induces a terminal marginal distribution $p^*(X_1)$ of the form:}
\[
p^*(X_1) \propto p_{\mathrm{ref}}(X_1) \cdot \exp\left( -\frac{\mathcal{G}(X_1)}{\alpha} \right).
\]
\begin{proof}
The Generalized Schrödinger Bridge problem can be viewed as a static variational problem on the space of path measures. The objective function is:
\begin{align}
\mathcal{J}(\mathbb{P}) = \alpha \mathcal{D}(\mathbb{P} \Vert \mathbb{P}^{\mathrm{ref}}) + \mathbb{E}_{\mathbb{P}}[\mathcal{G}(X_1)].
\end{align}
Recall that the KL divergence is defined as
\[
\mathcal{D}(\mathbb{P} \mid \mathbb{Q}) = \int \log \left( \frac{d\mathbb{P}}{d\mathbb{Q}} \right) d\mathbb{P}.
\]
Substituting this into the objective:
\begin{align}
\mathcal{J}(\mathbb{P}) 
&= \alpha \int \log \left( \frac{d\mathbb{P}}{d\mathbb{P}^{\mathrm{ref}}} \right) d\mathbb{P} + \int \mathcal{G}(X_1) d\mathbb{P} \\
&= \alpha \int \left[ \log \left( \frac{d\mathbb{P}}{d\mathbb{P}^{\mathrm{ref}}} \right) + \frac{\mathcal{G}(X_1)}{\alpha} \right] d\mathbb{P}.
\end{align}
Note that $\frac{\mathcal{G}(X_1)}{\alpha} = \log \exp\left( \frac{\mathcal{G}(X_1)}{\alpha} \right)$, thus:
\begin{align}
\mathcal{J}(\mathbb{P}) 
&= \alpha \int \log \left( \frac{d\mathbb{P}}{d\mathbb{P}^{\mathrm{ref}}} \cdot \exp\left( \frac{\mathcal{G}(X_1)}{\alpha} \right) \right) d\mathbb{P}.
\end{align}
Define an unnormalized auxiliary measure $\tilde{\mathbb{Q}}$ such that
\[
d\tilde{\mathbb{Q}} = \exp\left( -\frac{\mathcal{G}(X_1)}{\alpha} \right) d\mathbb{P}^{\mathrm{ref}}.
\]
Then the term inside the logarithm becomes $\frac{d\mathbb{P}}{d\tilde{\mathbb{Q}}}$.
The objective is minimized when $\mathbb{P}$ matches the normalized version of $\tilde{\mathbb{Q}}$. Therefore, the optimal measure $\mathbb{P}^*$ satisfies:
\begin{align}
\frac{d\mathbb{P}^*}{d\mathbb{P}^{\mathrm{ref}}}(\omega) \propto \exp\left( -\frac{\mathcal{G}(X_1(\omega))}{\alpha} \right).
\end{align}
Marginalizing this path measure at $\tau=1$, we obtain the terminal distribution:
\begin{align}
p^*(X_1) = \frac{d\mathbb{P}^*_1}{dx}(x) \propto p_{\mathrm{ref}}(X_1) \exp\left( -\frac{\mathcal{G}(X_1)}{\alpha} \right).
\end{align}
This concludes the proof.
\end{proof}

\subsection{Proof of Proposition~\ref{prop:contraction}}
\label{app:proof_contraction}

Fix a policy $\pi$.

\paragraph{Bellman operator.}
Recall the energy-regularized Bellman evaluation operator:
\begin{align}
(\mathcal{T}^\pi Q)(s,a)
\coloneqq
r(s,a)
+\gamma\,\mathbb{E}_{\substack{s' \sim p(\cdot\mid s,a)\\ a' \sim \pi(\cdot\mid s')}}
\left[\,Q(s',a')-\alpha\,\mathcal{E}_\pi(s')\,\right].
\label{eq:Tpi_def_app}
\end{align}
Here $\mathcal{E}_\pi(s')$ denotes the expected kinetic energy required to sample $a'\sim\pi(\cdot\mid s')$.

\paragraph{Contraction in $\|\cdot\|_\infty$.}
For any two bounded functions $Q_1,Q_2$ and any $(s,a)$, we have
\begin{align}
\big|(\mathcal T^\pi Q_1)(s,a)-(\mathcal T^\pi Q_2)(s,a)\big|
&=
\gamma\left|
\mathbb{E}_{s',a'}\big[Q_1(s',a')-Q_2(s',a')\big]
\right| \\
&\le
\gamma\,\mathbb{E}_{s',a'}\big[\big|Q_1(s',a')-Q_2(s',a')\big|\big]\\
&\le
\gamma\,\|Q_1-Q_2\|_\infty,
\end{align}
where the expectations are over $s'\sim p(\cdot\mid s,a)$ and $a'\sim\pi(\cdot\mid s')$.

Taking the supremum over $(s,a)$ yields
\[
\|\mathcal T^\pi Q_1-\mathcal T^\pi Q_2\|_\infty \le \gamma \|Q_1-Q_2\|_\infty.
\]
Thus $\mathcal T^\pi$ is a $\gamma$-contraction.

\paragraph{Existence and uniqueness of the fixed point.}
By fixed-point theorem, $\mathcal T^\pi$ has a unique fixed point $Q^\pi$.

\paragraph{Identification with the regularized return.}
Unrolling the fixed-point equation $Q^\pi=\mathcal T^\pi Q^\pi$ gives
\begin{align}
Q^\pi(s,a)
&=
\mathbb{E}\Big[
r(s_0,a_0)
+
\gamma\big(Q^\pi(s_1,a_1)-\alpha\,\mathcal E_\pi(s_1)\big)
\ \Big|\ s_0=s,a_0=a
\Big] \\
&=
\mathbb{E}\Big[
\sum_{t\ge 0}\gamma^t r(s_t,a_t)
\;-\;
\alpha\sum_{t\ge 1}\gamma^t \mathcal E_\pi(s_t)
\ \Big|\ s_0=s,a_0=a
\Big],
\end{align}
where $s_{t+1}\sim p(\cdot\mid s_t,a_t)$ and $a_{t+1}\sim\pi(\cdot\mid s_{t+1})$.

\hfill$\square$

\subsection{Proof of Proposition~\ref{prop:improvement}}
\label{app:proof_improvement}

Fix a policy $\pi$ and let $Q^\pi$ be the unique fixed point of $\mathcal T^\pi$ defined in Eq.~\eqref{eq:Tpi_def}
(i.e., $Q^\pi=\mathcal T^\pi Q^\pi$).

\paragraph{Policy improvement condition.}
Assume the updated policy $\pi_{\mathrm{new}}$ satisfies, for all states $s$,
\begin{align}
\mathbb{E}_{a\sim\pi_{\mathrm{new}}(\cdot\mid s)}[Q^\pi(s,a)]-\alpha\,\mathcal E_{\pi_{\mathrm{new}}}(s)
\ \ge\
\mathbb{E}_{a\sim\pi(\cdot\mid s)}[Q^\pi(s,a)]-\alpha\,\mathcal E_{\pi}(s).
\label{eq:improve_cond_app}
\end{align}

\paragraph{Show one-step improvement in Bellman backup.}
Consider the Bellman evaluation operators $\mathcal T^\pi$ and $\mathcal T^{\pi_{\mathrm{new}}}$.
For any $(s,a)$,
\begin{align}
(\mathcal T^{\pi_{\mathrm{new}}}Q^\pi)(s,a)
&=
r(s,a)
+\gamma\,\mathbb{E}_{s'\sim p(\cdot\mid s,a)}\mathbb{E}_{a'\sim\pi_{\mathrm{new}}(\cdot\mid s')}
\left[Q^\pi(s',a')-\alpha\,\mathcal E_{\pi_{\mathrm{new}}}(s')\right].
\end{align}
Applying \eqref{eq:improve_cond_app} at state $s'$ yields
\[
\mathbb{E}_{a'\sim\pi_{\mathrm{new}}(\cdot\mid s')}[Q^\pi(s',a')]-\alpha\,\mathcal E_{\pi_{\mathrm{new}}}(s')
\ \ge\
\mathbb{E}_{a'\sim\pi(\cdot\mid s')}[Q^\pi(s',a')]-\alpha\,\mathcal E_{\pi}(s').
\]
Taking expectation over $s'\sim p(\cdot\mid s,a)$ and substituting back gives
\begin{align}
(\mathcal T^{\pi_{\mathrm{new}}}Q^\pi)(s,a)
&\ge
r(s,a)
+\gamma\,\mathbb{E}_{s'\sim p(\cdot\mid s,a)}\mathbb{E}_{a'\sim\pi(\cdot\mid s')}
\left[Q^\pi(s',a')-\alpha\,\mathcal E_{\pi}(s')\right] \\
&=
(\mathcal T^{\pi}Q^\pi)(s,a).
\end{align}
Since $Q^\pi$ is the fixed point of $\mathcal T^\pi$, we have $(\mathcal T^\pi Q^\pi)(s,a)=Q^\pi(s,a)$; therefore
\begin{align}
(\mathcal T^{\pi_{\mathrm{new}}}Q^\pi)(s,a)\ge Q^\pi(s,a),\qquad \forall(s,a).
\label{eq:Tnew_ge_Q_app}
\end{align}

\paragraph{Monotone convergence to the fixed point.}
The operator $\mathcal T^{\pi_{\mathrm{new}}}$ is monotone: if $Q_1\le Q_2$ pointwise then
$\mathcal T^{\pi_{\mathrm{new}}}Q_1\le \mathcal T^{\pi_{\mathrm{new}}}Q_2$ (the reward and energy terms do not depend on $Q$ and expectations preserve order).

Apply $\mathcal T^{\pi_{\mathrm{new}}}$ iteratively to \eqref{eq:Tnew_ge_Q_app}:
\[
Q^\pi \le \mathcal T^{\pi_{\mathrm{new}}}Q^\pi \le (\mathcal T^{\pi_{\mathrm{new}}})^2 Q^\pi \le \cdots.
\]
By Proposition~\ref{prop:contraction}, $\mathcal T^{\pi_{\mathrm{new}}}$ is a $\gamma$-contraction; hence the sequence converges in $\|\cdot\|_\infty$ to its unique fixed point $Q^{\pi_{\mathrm{new}}}$.
Taking limits yields
\[
Q^{\pi_{\mathrm{new}}}(s,a)\ge Q^\pi(s,a),\qquad \forall(s,a),
\]
which proves monotonic improvement.

\hfill$\square$

\begin{table}[h]
    \centering
    \caption{Hyperparameters}
    \label{tab:fac_hyperparams}
    \begin{tabular}{p{4cm} p{8cm} c}
        \toprule
        & \textbf{Hyperparameter} & \textbf{Value} \\
        \midrule
        \multirow{10}{*}{\textbf{Hyperparameters}} &
        Optimizer & Adam \\
        & Critic learning rate & $3 \times 10^{-4}$ \\
        & Actor learning rate & $3 \times 10^{-4}$ \\
        & Discount factor & 0.99 \\
        & Batch Size & 256 \\
        & Replay buffer size & $1 \times 10^6$ \\
        & Target energy  & 0.5*dim(A) \\
        & NFE steps $N$ & 2\\
        & Solver & Midpoint Euler\\
        \midrule
        \multirow{3}{*}{\textbf{Value network}} &
        Network hidden dim & 512 \\
        & Network hidden layers & 3 \\
        & Network activation function & gelu \\
        \midrule
        \multirow{3}{*}{\textbf{Policy network}} &
        Network hidden dim & 512 \\
        & Network hidden layers & 2 \\
        & Network activation function & elu \\
        \bottomrule
    \end{tabular}
\end{table}

\section{Baselines}

In our experiments, we have implemented SAC, TD7, DIME,SAC-FLOW and TD-MPC2 using their original code bases and official results.
\begin{itemize}
    \item SAC~\citep{haarnoja2018soft}, we utilized the open-source PyTorch implementation, available at \url{https://github.com/pranz24/pytorch-soft-actor-critic}.
    \item TD7~\citep{fujimoto2023sale} was integrated into our experiments through its official codebase, accessible at \url{https://github.com/sfujim/TD7}.
    \item TD-MPC2~\citep{hansen2023td} was employed with its official implementation from \url{https://github.com/nicklashansen/tdmpc2} and used their official results. 
    \item SAC-FLOW~\citep{zhang2025sac} was employed with its official implementation from \url{https://github.com/Elessar123/SAC-FLOW.git}
    \item DIME~\citep{celik2025dime} was employed with its official implementation from \url{https://github.com/ALRhub/DIME.git} and used their official results. 
    \item FlowRL~\citep{lv2025flow} was employed with its official implementation from \url{https://github.com/bytedance/FlowRL}
\end{itemize}
\section{Environment Details }
We validate our algorithm on the DMControl~\citep{tassa2018deepmind} and HumanoidBench~\citep{sferrazza2024humanoidbench}, including the most challenging high-dimensional and Unitree H1 humanoid robot control tasks. On DMControl, we focus on the most challenging tasks(dog and humanoid domains). On HumanoidBench, we focus on tasks that do not require dexterous hands.

\begin{table}[h]
\centering
\begin{tabular}{lcc}
\toprule
\textbf{Task} & \textbf{State dim} & \textbf{Action dim} \\
\midrule
Humanoid Stand      & 67  & 24 \\
Humanoid Run      & 67  & 24 \\
Humanoid Walk     & 67  & 24 \\
Dog Run           & 223 & 38 \\
Dog Trot          & 223 & 38 \\
Dog Stand         & 223 & 38 \\
Dog Walk          & 223 & 38 \\
\bottomrule
\end{tabular}
\caption{Task dimensions for DMControl.}
\label{tab:task_dims}
\end{table}
\begin{table}[h]
\centering
\begin{tabular}{lcc}
\toprule
\textbf{Task} & \textbf{Observation dim} & \textbf{Action dim} \\
\midrule
H1 Crawl            & 51 & 19 \\
H1 Hurdle           & 51 & 19 \\
H1 Maze             & 51 & 19 \\
H1 Pole             & 51 & 19 \\
H1 Reach            & 57 & 19 \\
H1 Run              & 51 & 19 \\
H1 Sit Hard         & 64 & 19 \\
H1 Sit Simple       & 51 & 19 \\
H1 Slide            & 51 & 19 \\
H1 Stair            & 51 & 19 \\
H1 Stand            & 51 & 19 \\
H1 Walk             & 51 & 19 \\
\bottomrule
\end{tabular}

\caption{Task dimensions for HumanoidBench.}
\end{table}

\begin{figure}[htbp]
    \centering
    \includegraphics[width=0.8\linewidth]{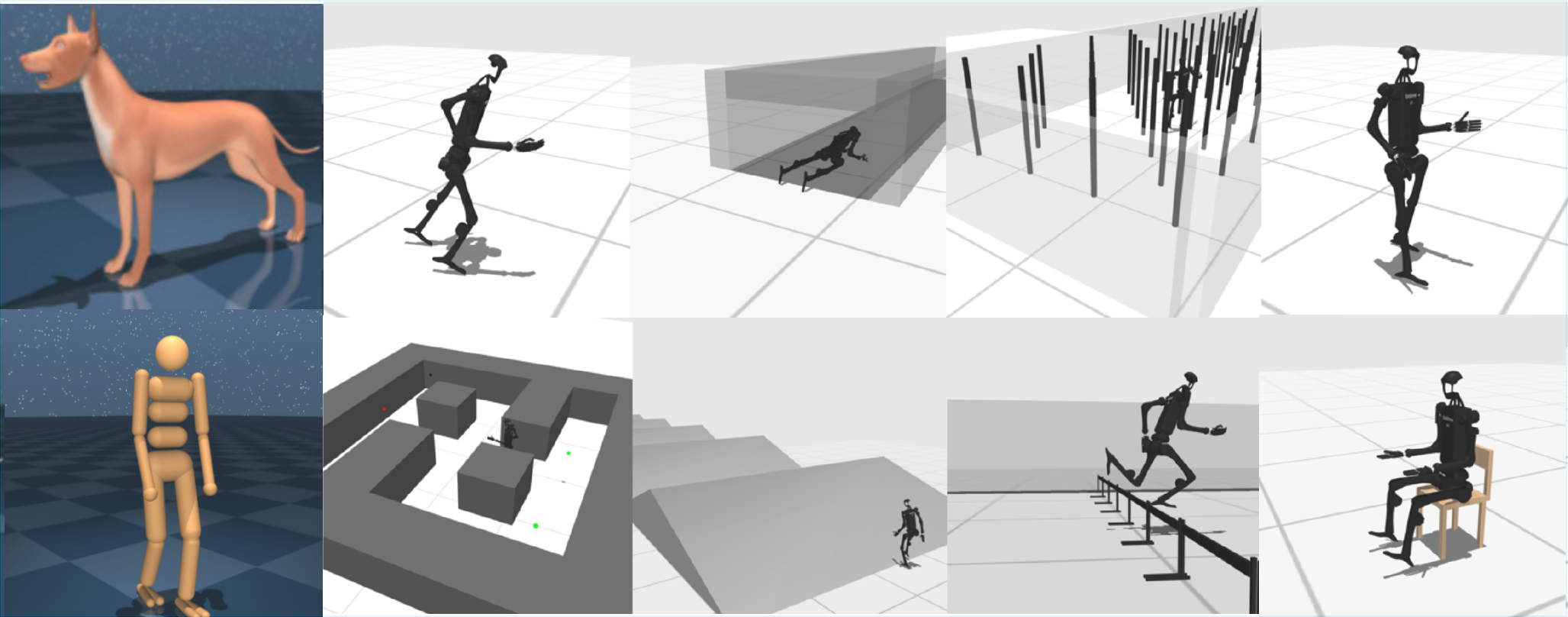}
    \caption{Task Domain Visualizations.}
    \label{fig:taskvis}
\end{figure}

\section{Toy Example Setup}\label{app:toy}
\label{app:toy_example}

  We consider a 2D multi-goal bandit to illustrate the effect of least-action regularization. The action space is $\mathcal{A} = \mathbb{R}^2$, with 8 goal positions placed uniformly on a circle of radius 4:
  \begin{align}
      g_k = \left(4 \cos\left(\frac{2\pi k}{8}\right),\, 4 \sin\left(\frac{2\pi k}{8}\right)\right), \quad k = 0, 1, \ldots, 7.
  \end{align}
  The reward function is the maximum Gaussian bump over all goals:
  \begin{align}
      r(a) = \max_{k} \exp\left(-\frac{\|a - g_k\|^2}{2}\right).
  \end{align}
  Both policies use a 2-layer MLP drift field with base distribution $\nu = \mathcal{N}(0, I)$ and $K = 24$ Euler steps. Without regularization, Naive Flow collapses to a single mode (1/8 coverage) while its kinetic energy explodes. FLAC maintains bounded energy via dual ascent and discovers all 8 goals (8/8 coverage), demonstrating that least-action regularization prevents mode collapse.

\section{Estimation of Target Kinetic Energy}
\label{app:target_energy_heuristic}

The heuristic adjustment of the target kinetic energy $E_{\mathrm{tgt}}$ in our \textbf{Adaptive Kinetic Budgeting} mechanism draws direct inspiration from the target entropy heuristic used in Soft Actor-Critic (SAC). In SAC, the target entropy is typically set to $\mathcal{H}_{\mathrm{target}} = -\dim(\mathcal{A})$ to prevent the policy from collapsing into a deterministic point mass. Similarly, FLAC requires a reference value to regulate the trade-off between control effort and stochasticity. However, since we operate in the energy domain rather than entropy, we derive a geometric heuristic grounded in the physics of optimal transport.

Here, we derive a practical rule of thumb for setting $E_{\mathrm{tgt}}$ based on the \textbf{Transport Cost} required to traverse the action space.

\subsection{Geometric Derivation}

Consider a standard continuous control setting where the action space is bounded and normalized to $\mathcal{A} = [-1, 1]^d$. The generative policy evolves a latent state $X_\tau$ from a base distribution $X_0 \sim \mathcal{N}(0, I)$ (centered at the origin) to a terminal action $X_1$.

\paragraph{Unit Displacement Cost.}
Suppose the policy needs to generate an action at the boundary of the feasible space (e.g., $x=1$) starting from the mean of the prior (e.g., $x=0$). Under the Principle of Least Action, the most energy-efficient trajectory is a constant-velocity path (a geodesic):
\[
u(\tau) = v, \quad \text{where } v = \frac{\Delta x}{\Delta \tau} = \frac{1 - 0}{1} = 1.
\]
The kinetic energy consumed by this specific ``unit''  trajectory is:
\[
\mathcal{E}_{\text{unit}} = \int_0^1 \frac{1}{2} \|u(\tau)\|^2 d\tau = \int_0^1 \frac{1}{2} (1)^2 d\tau = 0.5.
\]
This implies that to deterministically shift the probability mass from the center to the boundary of the action space, the system must expend at least $0.5$ units of energy per dimension.

\paragraph{Dimension Scaling.}
Since the total kinetic energy is additive across independent dimensions (due to the squared norm $\|u\|^2 = \sum u_i^2$), the total energy required to reach the boundary in all $d$ dimensions is $0.5 \times d$.

\subsection{The Energy Budget Formula}

Based on the derivation above, we formulate the target energy budget as a linear function of the action dimension:
\begin{equation}
    E_{\mathrm{tgt}} = \mathcal{C} \cdot \dim(\mathcal{A}),
\end{equation}
where $\mathcal{C}$ is the \textbf{Energy Factor} representing the average allowable kinetic energy per dimension.

\paragraph{Comparison with SAC.}
In our experiments, we found that setting $\mathcal{C} \in [0.5, 2.5]$ yields robust performance across all tasks, and we set C=0.5, eliminating the need for per-task hyperparameter tuning. This offers a geometric counterpart to SAC's entropy heuristic.

\paragraph{Robustness via Auto-tuning.}
Crucially, the specific choice of $\mathcal{C}$ is not overly sensitive due to the \textbf{automatic tuning mechanism} of the Lagrange multiplier $\alpha$. The adaptive $\alpha$ dynamically scales the penalty weight to balance the energy constraint against the reward signal. Consequently, even if $\mathcal{C}$ is suboptimal, the algorithm can adjust $\alpha$ to find a stable equilibrium, making FLAC significantly less brittle than methods requiring fixed regularization weights.

\section{More Experimental Results}\label{all}


\subsection{Sensitivity to NFE }

In all experiments, we set the number of function evaluations (\texttt{NFE}) to 2.
We empirically observed that increasing \texttt{NFE} does help accelerate convergence in the early stages of training. However, it has little impact on the final performance as showed in Figure~\ref{fig:nfe}. This suggests that while higher \texttt{NFE} can facilitate faster initial learning, the ultimate effectiveness of the policy is not strongly dependent on this hyperparameter, the ultimate effectiveness of the policy is not strongly dependent on this hyperparameter. We hypothesize that this phenomenon arises because the kinetic-energy regularization biases the learned generation dynamics toward low-energy trajectories, which tend to be shorter and closer to straight-line transports from the prior to the action. This effect is also observed in the toy example (Figure~\ref{fig:toy}), where energy regularization yields straighter and shorter transport paths.

\begin{figure}[htbp]
    \centering
    \begin{subfigure}{0.48\linewidth}
        \centering
        \includegraphics[width=\linewidth]{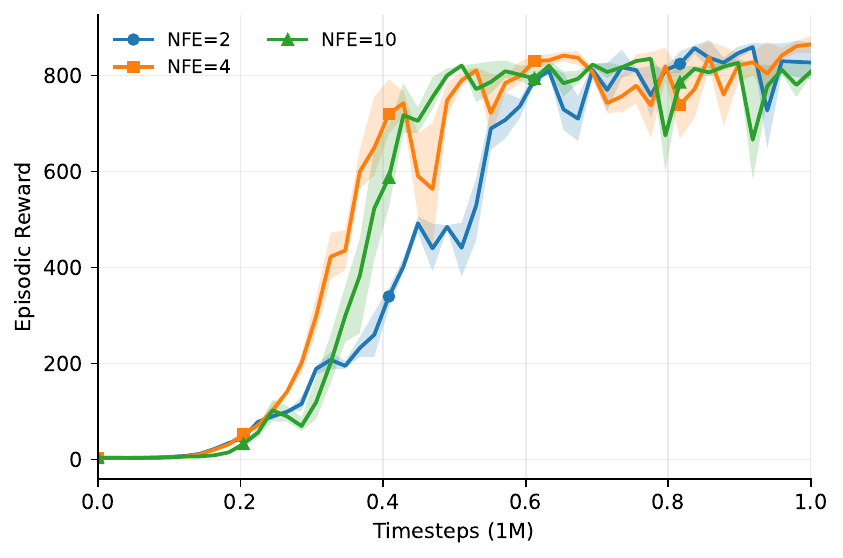}
        \caption{h1-run}
        \label{fig:nfe2}
    \end{subfigure}
    \hfill
    \begin{subfigure}{0.48\linewidth}
        \centering
        \includegraphics[width=\linewidth]{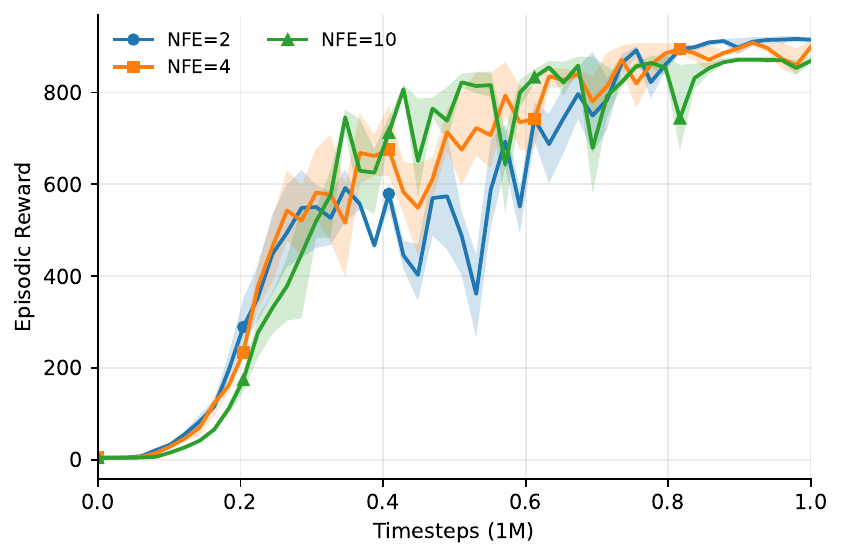}
        \caption{h1-walk}
        \label{fig:nfe_other}
    \end{subfigure}
    \caption{Sensitivity to NFE. Increasing NFE accelerates early convergence but has little impact on final performance.}
    \label{fig:nfe}
\end{figure}

This finding supports that, for FLAC: the use of a small, fixed \texttt{NFE} for efficient training without sacrificing the quality of the final results.

\subsection{Efficiency}

In addition to sample efficiency, we also analyzed the overall computational efficiency of our algorithm in Figure~\ref{fig:efficency}. Specifically, we conducted a comparative study against DIME on seven challenging tasks from the DMC-hard benchmark. In these experiments, the horizontal axis represents wall-clock time. Although our implementation is based on PyTorch(with torch.compile for acceleration), thanks to the robustness of our method with respect to the NFE hyperparameter, our approach remains more efficient than DIME (failed to learn effectively at NFE=2), which is implemented in JAX. This demonstrates that our method achieves superior computational efficiency despite the differences in underlying frameworks.

\begin{figure}[htbp]
    \centering
    \includegraphics[width=0.8\linewidth]{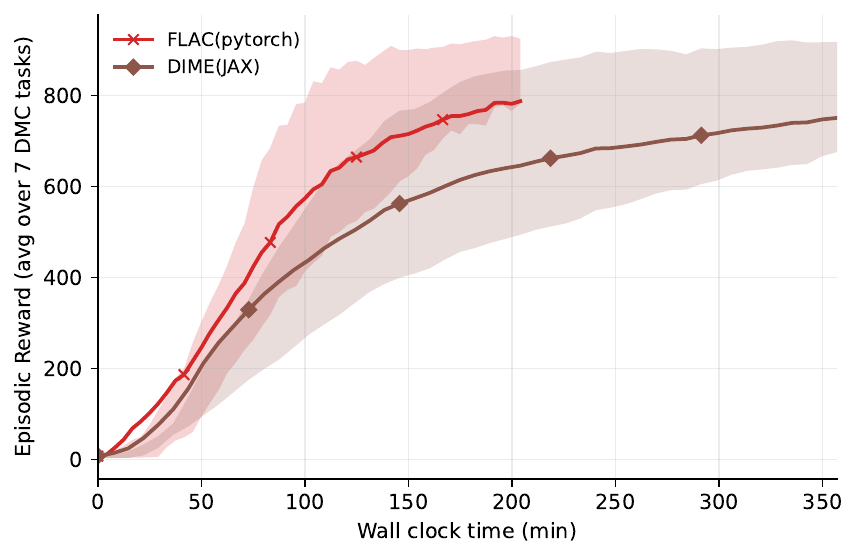}
    \caption{Computational Efficiency Comparison to DIME}
    \label{fig:efficency}
\end{figure}

\subsection{Comprehensive Results}
We report the complete results on DMC-Hard and HumanoidBench in Fig.~\ref{fig:dmc} and Fig.~\ref{fig:hb}, respectively. On HumanoidBench, FLAC matches or outperforms all baselines on most tasks, while underperforming a strong model-based baseline on a small subset of tasks; on DMC-Hard, FLAC matches or outperforms all baselines across tasks.
\begin{figure*}[htbp]
    \centering
    \includegraphics[width=1\textwidth]{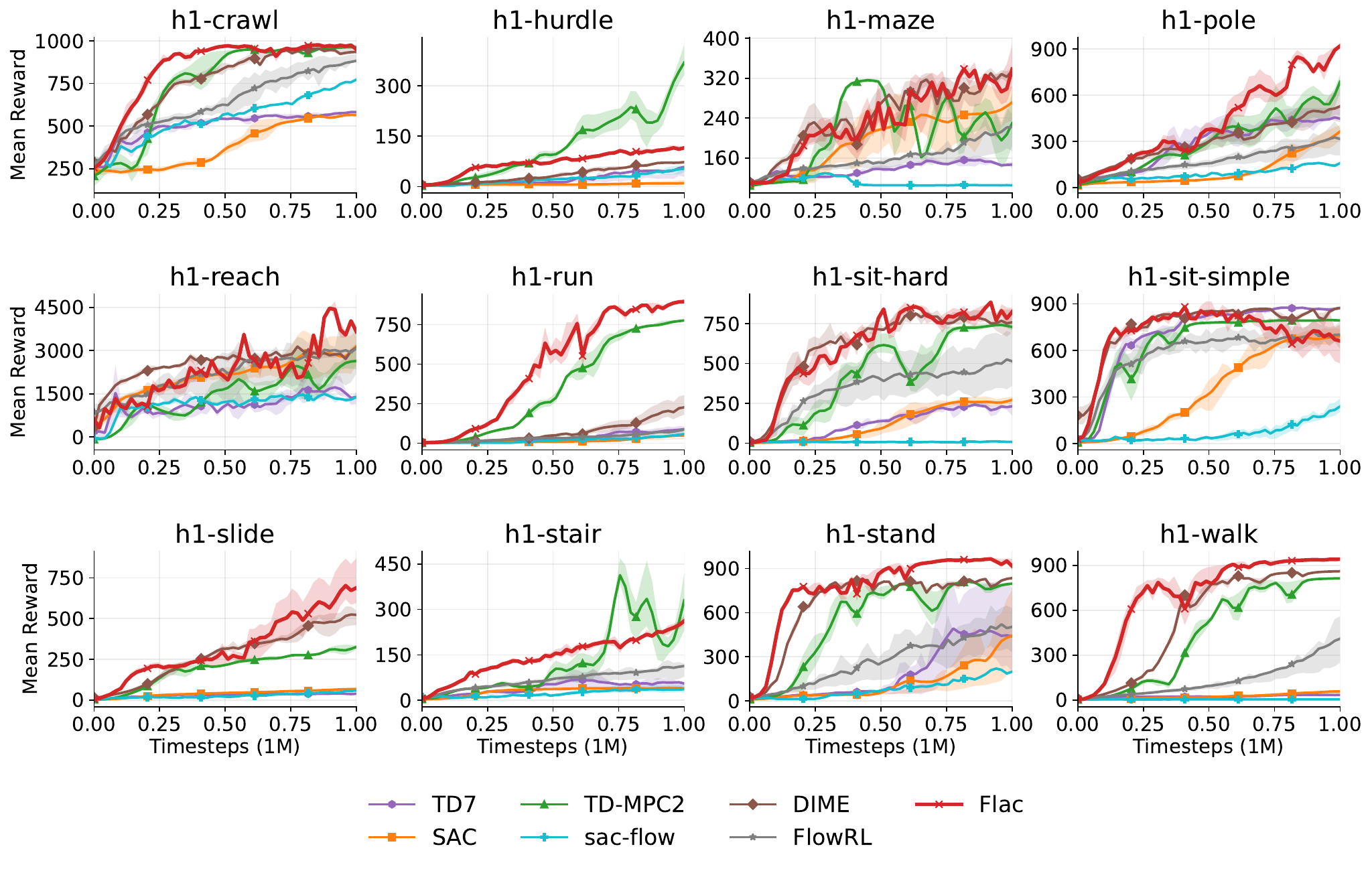}
    \caption{Full Results on Humanoid Bench.}
    \label{fig:hb}
\end{figure*}
\begin{figure*}[t]
    \centering
    \includegraphics[width=1\textwidth]{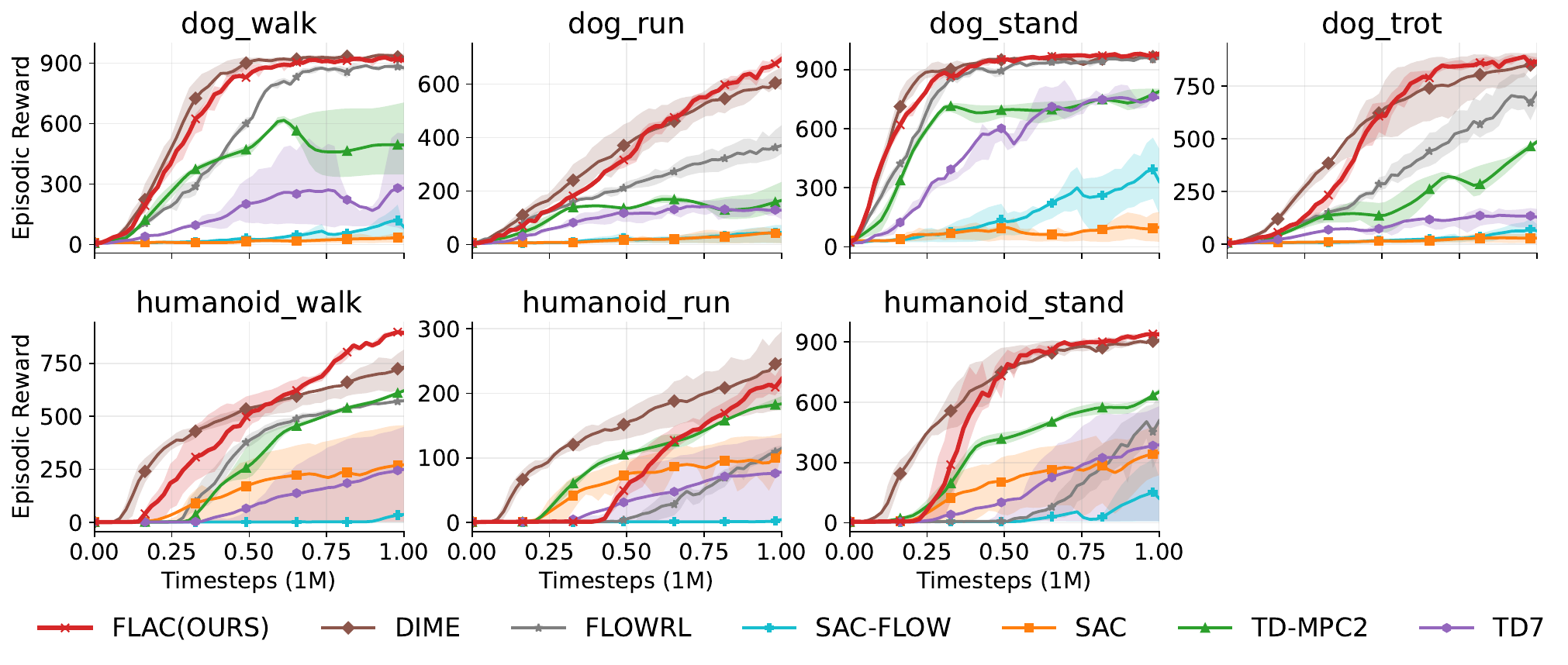}
    \caption{Full Results on DMC-Hard.}
    \label{fig:dmc}
\end{figure*}

\newpage

\end{document}